\tikzset{split fill/.style args={#1 and #2}{path picture={
    \fill [#1] (path picture bounding box.south west)
      rectangle (path picture bounding box.north);
    \fill [#2] (path picture bounding box.south)
      rectangle (path picture bounding box.north east);
}}} 
\DeclareMathOperator*{\argmax}{arg\,max}
\newtheorem{corollary}{Corollary}
\newtheorem{definition}{Definition}
\newtheorem{example}{Example}
\newtheorem{lemma}{Lemma}
\newtheorem{proposition}{Proposition}
\newtheorem{theorem}{Theorem}
\newcommand{\p}{\mathbb{P}}
\newcommand{\ext}[1]{\mathcal{E}_{#1}}
\newcommand{\Tobs}{D}
\newcommand{\hyp}{\mathcal{M}}
\newcommand{\dom}{[\feat]^{\leq 2}}
\newcommand{\entropy}{H}
\newcommand{\feat}{\mathcal{F}}
\newcommand{\Hnorm}{\eta}
\newcommand{\dfpath}{\textit{df}-path}
\newcommand{\E}{E}
\newcommand{\X}{\mathcal{X}}
\newcommand{\RI}{\overset{\texttt{Invoice}}{\sim}}
\newcommand{\RO}{\overset{\texttt{Order}}{\sim}}
\newcommand{\RP}{\overset{\texttt{Payment}}{\sim}}
\newcommand{\ROP}{\overset{\X}{\sim}}
\definecolor{Gray}{gray}{0.85}
\definecolor{LightCyan}{rgb}{0.88,1,1}
\begin{document}

%


%
%
%

\title{Posets and Bounded Probabilities for\\Discovering Order-inducing Features\\in Event Knowledge Graphs}


\author{{Christoffer~Olling~Back\orcidlink{0000-0001-7998-7167}(\Letter) and Jakob~Grue~Simonsen\orcidlink{0000-0002-3488-9392}}
\thanks{C. O. Back was with the Department of Computer Science, University of Copenhagen, Denmark and ServiceNow Denmark Aps, Aarhus, Denmark. \mbox{e-mail: back@di.ku.dk.}}
\thanks{J. G. Simonsen was with the Department of Computer Science, University of Copenhagen, Denmark and the IT University of Copenhagen, Denmark.}
\thanks{Supported by Innovation Fund Denmark as part of DIREC initiative.}}
\markboth{}{}
%



\maketitle

\begin{abstract}
%
%
Event knowledge graphs (EKG) extend the classical notion of a trace to capture multiple, interacting views of a process execution.
In this paper, we tackle the open problem of automating EKG discovery from uncurated data through a principled probabilistic framing based on the outcome space resulting from featured-derived partial orders on events.
From this we derive an EKG discovery algorithm based on statistical inference rather than an ad hoc or heuristic-based strategy, or relying on manual analysis from domain experts.

This approach comes at the computational cost of exploring a large, non-convex hypothesis space.
In particular, solving the maximum likelihood term in our objective function involves counting the number of linear extensions of posets, which in general is \#P-complete.
Fortunately, bound estimates suffice for model comparison, and admit incorporation into a bespoke branch-and-bound algorithm.
We establish an upper bound on our objective function which we show to be antitonic w.r.t.~search depth for branching rules that are monotonic w.r.t.~model inclusion.
This allows pruning of large portions of the search space, which we show experimentally leads to rapid convergence toward optimal solutions that are consistent with manually built EKGs.


\end{abstract}

\begin{IEEEkeywords}
Bayesian inference, Branch and bound, Event knowledge graphs, Linear extension, Partial order, Posets, \mbox{Process mining}
\end{IEEEkeywords}

%
\IEEEpeerreviewmaketitle

\section{Introduction}\label{sec:introduction}
\IEEEPARstart{E}{vent} knowledge graphs (EKGs) describe the temporal ordering of events associated with entities and objects via so-called local directly-follows paths (\dfpath s), resulting in an intertwined web of event sequences synchronized at shared events.
Automating the process of identifying, ``relevant structural relations between entities'' is a stated open problem, currently requiring domain knowledge~\cite{esser_multi-dimensional_2021,van_der_aalst_process_2022}.

  \tikzset{%
    diagonal fill/.style 2 args={%
        double color fill={#1}{#2},
        shading angle=45,
        opacity=1.0},
    other filling/.style={%
        shade,
        shading=myshade,
        shading angle=0,
        opacity=0.5}
   }

\begin{table*}[t]
\begin{minipage}[b]{.55\textwidth }%
\caption{Event table in chronological order. Colors indicate grouping w.r.t.~\texttt{Order} and \texttt{Invoice}. Adapted from \cite{van_der_aalst_process_2022}.}\label{tab:event_table}
\footnotesize\centering
\begin{tabular}{c| l | c | c | c | c c | c}
\hline
\multicolumn{1}{c|}{\rotatebox{90}{\texttt{Event}}} & \multicolumn{1}{c|}{\rotatebox{90}{\texttt{Activity}}} & \multicolumn{1}{c|}{\rotatebox{90}{\texttt{Actor}}} & \multicolumn{1}{c|}{\rotatebox{90}{\texttt{Order}}} & \multicolumn{1}{c|}{\rotatebox{90}{\texttt{Supplier}}\rotatebox{90}{\texttt{Order}}} & \multicolumn{2}{c|}{\rotatebox{90}{\texttt{Invoice}}} & \multicolumn{1}{c}{\rotatebox{90}{\texttt{Payment}~}}\\
\hline
    $e_1$~ & Create Order & $R_1$ & \cellcolor{cyan!30}$O_1$ &   && &\\
    $e_2$~ & Create Order & $R_1$ & \cellcolor{cyan!100}$O_2$ &   && &\\
    $e_3$~ & Place S.O. & $R_1$ &    & $A$ && &\\
    $e_4$~ & Place S.O. & $R_3$ &    & $B$ && &\\
    $e_5$~ & Create Invoice & $R_3$ & \cellcolor{cyan!100}$O_2$ &   & \multicolumn{2}{c|}{ \cellcolor{yellow!30}$I_2$} & \\
    $e_6$~ & Receive S.O. & $R_2$ &    & $A$ && &\\
    $e_7$~ & Update S.O. & $R_1$ & \cellcolor{cyan!100}$O_2$ & $B$ &&  &\\
    $e_9$~ & Update Invoice & $R_2$ &    & $A$ & \multicolumn{2}{c|}{ \cellcolor{yellow!30}$I_2$} &\\
    $e_{18}$ & Create Invoice & $R_3$ & \cellcolor{cyan!30}$O_1$ &   & \multicolumn{2}{c|}{ \cellcolor{yellow!100}$I_1$} &\\
    $e_{19}$ & Receive S.O. & $R_2$ &    & $B$ &&  &\\
    $e_{28}$ & Ship Order & $R_4$ & \cellcolor{cyan!30}$O_1$ &   &&  &\\
    $e_{29}$ & Receive Payment & $R_5$ & & & & & $P_1$ \\
    $e_{30}$ & Clear Invoice & $R_5$ & & & \cellcolor{yellow!100}~$I_1$~ & \cellcolor{yellow!30}$I_2$ & $P_1$\\
    $e_{34}$ & Ship Order & $R_4$ & \cellcolor{cyan!100}$O_2$ &   && &\\
    \hline
\end{tabular}
\end{minipage}%
\begin{minipage}{.35\textwidth}
\centering
\begin{tikzpicture}[->]
    \tikzstyle{event}=[circle,draw,minimum size=15pt,inner sep=0pt]

    \node[event,fill=cyan!70] (e2) at (0,3) {$e_2$};
    \node[event,split fill=cyan!70 and yellow!30] (e5) at (0,2) {$e_5$ \nodepart{lower} $e_5$};
    \node[event,fill=cyan!70] (e7) at (0,1) {$e_7$};
    \node[event,fill=cyan!70] (e34) at (0,0) {$e_{34}$};

    \node[event] (e3) at (0,4) {$e_3$};
    \node[event] (e4) at (1,4) {$e_4$};
    \node[event] (e6) at (2,4) {$e_6$};
    \node[event,fill=yellow!30] (e9) at (1,1) {$e_9$};
    \node[event,split fill=yellow!30 and yellow!100] (e30) at (1,0) {$e_{30}$};

    \node[event] (e19) at (1,3) {$e_{19}$};
    \node[event] (e29) at (2,3) {$e_{29}$};
    \node[event,fill=cyan!30] (e1) at (2,2) {$e_1$};
    \node[event,split fill=yellow!100 and cyan!30] (e18) at (2,1) {$e_{18}$};
    \node[event,fill=cyan!30] (e28) at (2,0) {$e_{28}$};

    \draw (e2) -- (e5);
    \draw (e5) -- (e7);
    \draw (e7) -- (e34);

    \draw (e1) -- (e18);
    \draw (e18) -- (e28);

    \draw (e5) -- (e9);
    \draw (e9) -- (e30);
    \draw (e18) -- (e30);
\end{tikzpicture}
\smallskip
\captionof{figure}{Transitive reduction of the poset induced by feature relations in Tab.~\ref{tab:event_table}.}
\label{fig:poset-running}
\end{minipage}
\end{table*}

In this paper, we present a theoretically principled method for automatically identifying which event-level features should be included as entities in the EKG, thereby defining the structure of its \dfpath s.
Our approach rests on a natural Bayesian formulation which reduces the task to its core elements.
In particular, features that are candidates to become included as entities are analyzed w.r.t.~the \dfpath s that they induce, which taken together define a partial order on the entire set of events (i.e.,~a poset).
The resulting poset then becomes the cornerstone of a probabilistic approach to model comparison.
This paper extends~\cite{back_discovering_2025} with additional propositions, lemmas, proofs, examples, visualizations, and EKG comparisons for evaluation; as well as detailed pseudocode descriptions of the core procedures in our approach.

We take a maximum likelihood approach, seeking the model that maximizes the probability of the observed evidence, namely the sequence of events in a given sample.
This is by definition one linear extension permitted by the event poset, so the task then becomes to identify those features that most efficiently narrow the outcome space while permitting the observed sequences.

The maximum likelihood formulation requires some limiting factor to avoid arbitrarily complex models~\cite{mackay_information_2003}, and to this end we introduce an entropy-based measure of \emph{feature informativeness}.
This measure can be interpreted as a prior probability over models that clearly favors informative models and disfavors models that are either trivially simple or add unnecessary complexity.
These two terms -- likelihood and model prior -- taken together define the posterior probability of a model given observed data, weighted by model informativeness.

A challenge arises in calculating the likelihood term, as it relies on the number of linear extensions of the event poset.
In general, counting linear extensions is \#P-complete~\cite{brightwell_counting_1991}, but we show that poset structure can be exploited to establish bounds on the posterior, by extension posterior odds between models.
This is sufficient, as we are only interested in comparing models, not exact computation of the posterior.
Finally, we incorporate this model comparison strategy into a bespoke branch and bound algorithm for exploring feature combinations that can prune large portions of the search space, making the overall task tractable.

In the remainder, we review related work in Sec.~\ref{sec:related-work}, introduce notation and definitions in Sec.~\ref{sec:preliminaries}, and formalize EKGs as posets in Sec.~\ref{sec:event-knowledge-graph-as-partial-order}.
We present our Bayesian problem formulation in Sec.~\ref{sec:a-probabilistic-formulation} and outline the branch and bound strategy in Sec.~\ref{sec:strategies-bounds-and-model-selection}.
We present experimental results in Sec.~\ref{sec:evaluation}, and conclude in Sec.~\ref{sec:conclusion} with suggestions for future work.

\section{Related Work}\label{sec:related-work}
Event knowledge graphs were first introduced by Fahland et al.~\cite{esser_multi-dimensional_2021,van_der_aalst_process_2022,polyvyanyy_classifying_2021}, and reflect a growing recognition in the process mining community of the shortcomings of the classical trace concept based on a single case identifier~\cite{adams_defining_2022,olling_back_stochastic_2021,olveczky_object-centric_2019}.
Even in scenarios where a classical event log is sufficient, the process of extracting event traces from the raw data source still tends to be an expensive, ad-hoc process relying on scarce domain expert input~\cite{berti_generic_2023}.
While some work has been done on automating this task for classical event logs~\cite{diba_extraction_2020}, for novel representations like EKGs, automation remains largely unaddressed~\cite{van_der_aalst_process_2022}.

The task of calculating the number of linear extensions of a poset has been explored by mathematicians for decades as a combinatorics problem in its own right~\cite{butler_number_1972}; as well as by computer scientists, typically in the context of sorting and other comparison-based algorithms~\cite{aigner_producing_1981,brightwell_counting_1991,schellekens_entropy_2020}.
It also arises in a number of other applications such as convex rank tests~\cite{morton_convex_2009}, structure learning for graphical models~\cite{niinim_structure_2016}, measuring flexibility of partial-order plans~\cite{muise_optimal_2016}, sequence analysis~\cite{mannila_global_2000}, and is equivalent to counting topological sorts on an acyclic graph~\cite{robinson_counting_1977}.

Branch and bound algorithms have long been used for feature selection in general~\cite{chen_improved_2003,nakariyakul_adaptive_2007,ring_optimal_2015,thakoor_branch-and-bound_2011,yu_more_1993}, and model selection~\cite{thakoor_branch-and-bound_2008,thakoor_branch-and-bound_2011} and maximum likelihood estimation in particular~\cite{decani_branch_1972,zelniker_optimal_2005}.

\section{Preliminaries}\label{sec:preliminaries}
The reader is expected to have a basic familiarity with combinatorics, order theory, probability theory, and set theory.

The set of possible permutations of the elements of a set $S$ is denoted $S!$ and the symmetric difference between sets $A$ and $B$ as $A~\triangle~B$.
We denote by $[S]^{\leq n}$ the set of subsets of $S$ containing $n$ or fewer elements, that is: $\{ S' \subseteq S \mid n \geq |S'|\}$.

An event is a unique execution of some action (i.e.,~activity) at a certain point in time, and associated with (potentially empty) attributes, (i.e.,~features).
In the event table in Tab.~\ref{tab:event_table} each event is represented by a row.

We denote by $\E$ a nonempty set of events, by $D$ a total order on $E$ called the observed data, by $\mathcal{D} \coloneqq \{D_1,\ldots,D_N\}$ a set of $N$ independent observations.
We denote by $X_i$ a feature associated with events, i.e.,~a column in an event table, and by $X_i(e)$ the set of values of $X_i$ associated with event $e$.
The set of all features is denoted by $\feat \coloneq \{X_1,\ldots,X_M\}$.
Singleton sets of features are called \emph{atomic}, whereas sets containing exactly two features are called \emph{derived}, following~\cite{van_der_aalst_process_2022}.
These \emph{sets} of features are denoted $\X$, with $\X \in [\feat]^{\leq 2}$, and constitute the building blocks of models and, by extension, partial orders.
We denote by $\overset{\X}{\sim}$ a feature relation between events as defined in Def.~\ref{def:feature-relation} and~\ref{def:derived-feature-relation}.

A model $\hyp \subseteq [\feat]^{\leq 2}$ is a collection of sets of features.
For example, the model $\{\{X_1\},\{X_2,X_3\}\}$ contains one atomic and one derived feature.
An ordering relation $\prec$ when paired with a set, $P \coloneq (E, \prec)$, is called a partially ordered set or poset.
The set of linear extensions of poset $P$ is written $\ext{P}$, and represents all total orders on $E$ that respect the partial order $\prec$.
We are particularly interested in the number of linear extensions, written $|\ext{P}|$.
The function $g_\hyp(-)$ takes a partial order and reduces it according to a model $\hyp$, see Def.~\ref{def:dfpath-generator}.

Finally, we write the probability of statement $\phi$ being true as $\p(\phi)$; for example,~$\p(X_i = x)$ or $\p(\Tobs \in \ext{P})$.
 The outcome space of the stochastic variable $X$ is written $\Omega_X$ and we write the Shannon entropy of $X$ as $\entropy(X)$ and its normalized variant as $\Hnorm(X)$.

\section{Event Knowledge Graph as Poset}\label{sec:event-knowledge-graph-as-partial-order}
EKGs are not per se process models with execution semantics, but rather a representation of one observed outcome: one execution trace.
Each event is a specific, unique occurrence that will never be duplicated, and the local directly-follows relations between them are not, strictly speaking, order relations in the sense of rules or constraints of a \emph{model} that defines a language of traces.

Nevertheless, we will argue that EKGs are functionally equivalent to posets.
After establishing definitions, we show how an observed order is reduced w.r.t.~ a model.
In this case, the original poset will be a total order (an event table) and the resulting poset interpreted as the \dfpath s of an EKG\@.

    \begin{figure*}
        \begin{center}
            \begin{tikzpicture}[->]
    \tikzstyle{event}=[circle,minimum size=12pt,inner sep=0pt]
    \foreach \col/\i [count=\x from 0] in {white/1, gray/2, white/3, white/4, gray/5, gray/6, white/7, white/8}
        \node[draw,event,fill=\col!50] (T1-\x) at (0.65*\x,1.2) {$\i$};
    \foreach \col/\i [count=\x from 0] in {gray/2, white/1, white/3, gray/5, white/4, white/7, gray/6, white/8}
        \node[draw,event,fill=\col!50] (T2-\x) at (0.65*\x,0.6) {$\i$};
    \foreach \col/\i [count=\x from 0] in {white/1, white/3, white/4, white/7, white/8, gray/2, gray/5, gray/6}
        \node[draw,event,fill=\col!50] (T3-\x) at (0.65*\x,0) {$\i$};

    \foreach \x [count=\y] in {0,...,6} \draw (T1-\x) -- (T1-\y);
    \foreach \x [count=\y] in {0,...,6} \draw (T2-\x) -- (T2-\y);
    \foreach \x [count=\y] in {0,...,6} \draw (T3-\x) -- (T3-\y);

    \foreach \col/\i [count=\x from 2] in {gray/2, gray/5, gray/6}
        \node[draw,event,fill=\col!50] (P-\i) at (8+0.7*\x,0.95) {$\i$};
    \foreach \col/\i [count=\x] in {white/1, white/3, white/4, white/7, white/8}
        \node[draw,event,fill=\col!50] (P-\i) at (8+0.7*\x,0.15) {$\i$};

        {
    \draw (P-2) -- (P-5);\draw (P-5) -- (P-6);
    \draw (P-1) -- (P-3);\draw (P-3) -- (P-4);
    \draw (P-4) -- (P-7);\draw (P-7) -- (P-8);
        }

    \node[fill=white!0,minimum size=0pt] (T1) at (5.1,1.2) {};
    \node[fill=white!0,minimum size=0pt] (T2) at (5.1,0.6) {};
    \node[fill=white!0,minimum size=0pt] (T3) at (5.1,0) {};
    \node[fill=white!0,minimum size=5pt] (g1) at (6.2,1.0) {$g_\hyp$};
    \node[fill=white!0,minimum size=5pt] (g2) at (6.2,0.6) {$g_\hyp$};
    \node[fill=white!0,minimum size=5pt] (g3) at (6.2,0.2) {$g_\hyp$};
    \node[fill=white!0,minimum size=1pt] (image) at (8.4,0.6) {};

        {
    \draw (T1) -- (g1);
    \draw (T2) -- (g2);
    \draw (T3) -- (g3);
    \draw (g1) -- (image);
    \draw (g2) -- (image);
    \draw (g3) -- (image);
        }
\end{tikzpicture}
        \end{center}
        \caption{The \dfpath~generator $g_\hyp$ is in general not injective: it will typically map several different partial orders over the same set and model $\hyp$ to one new partial order.}\label{fig:df-path-generator}
    \end{figure*}

\begin{definition}[Atomic feature relation]
    Two events $a,b \in \E$ are related w.r.t.~feature $\X \coloneqq \{X\}$ if they share one or more of the values for that feature. Formally,
\[
\overset{X}{\sim}~\coloneqq \left\{~(a,b) \mid X(a) \cap X(b) \neq \emptyset~\right\}
\]
    \label{def:feature-relation}
\end{definition}
\begin{example}
    The relation for atomic feature $\{\texttt{Invoice}\}$ in Tab.~\ref{tab:event_table} is defined as follows.
    Events $e_5, e_{9}, e_{30}$ share value $I_2$:
    \begin{align*}
    \texttt{Invoice}(e_5)~\cap~\texttt{Invoice}(e_{9}) =~~~& \\ \texttt{Invoice}(e_9)~\cap~\texttt{Invoice}(e_{30})& = \\
    \texttt{Invoice}(e_{30})~\cap~\texttt{Invoice}&(e_{5}) = \{I_2\}
    \end{align*}
    while events $e_{18}$ and $e_{30}$ share value $I_1$: 
    \[\texttt{Invoice}(e_{18})~\cap~\texttt{Invoice}(e_{30}) = \{I_1\}.\]
    This results in the relations: 
    \[e_5 \RI e_9, e_9 \RI e_{30},\]
    \[e_5 \RI e_{30}, \text{~and~} e_{18} \RI e_{30}.\]
\end{example}

\begin{definition}[Derived feature relation]
    Two events $a,d \in \E$ are related w.r.t.~derived feature $\X \coloneqq \{X_i,X_k\}$ if they are related by $X_i$ or $X_k$, or if there exists a third atomic feature $X_j\in\feat \setminus \{X_i,X_k\}$ and distinct events $b,c \in \E$ s.t. $a$ and $d$ are related transitively via $b$ and $c$ by relations $\overset{X_i}{\sim}$,$\overset{X_j}{\sim}$, and $\overset{X_k}{\sim}$.
    Formally,
\begin{align*}
\overset{X_i,X_k}{\sim}&~\coloneqq~\overset{X_i}{\sim} \cup \overset{X_k}{\sim} \cup~ \\
&\left\{(a,d) \mid \exists X_j,b,c.~X_j\notin \{X_i,X_k\} \land a \overset{X_i}{\sim} b\overset{X_j}{\sim}c \overset{X_k}{\sim}d\right\}
\end{align*}
    \label{def:derived-feature-relation}
\end{definition}
\begin{example}
    The relation for derived feature $\X \coloneq \{\texttt{Order},\texttt{Payment}\}$ in Tab.~\ref{tab:event_table} is defined as follows.
    Letting $X_j = \texttt{Invoice}$ and considering those events that are atomically related via value $I_1$, namely $e_{18}$ and $e_{30}$, we have
    \[e_1\RO e_{18}\RI e_{30} \RP e_{29}\]
    and
    \[e_{28}\RO e_{18}\RI e_{30} \RP e_{29}\]
    in addition to variants of these resulting from the reflexive relations: 
    \[e_{18}\RO e_{18} \textrm{~and~} e_{30} \RP e_{30}.\]
    Once events related by $I_2$ are included, in addition to the  atomic relations in $\RO$ and $\RP$, we have 
    \[e_1 \ROP e_{29},~~e_1 \ROP e_{30},~~e_2 \ROP e_{29},~~e_2 \ROP e_{30},~~e_{34} \ROP e_{29},\]
    \[e_5 \ROP e_{29},~~e_5 \ROP e_{30},~~e_7 \ROP e_{29},~~e_7 \ROP e_{30},~~e_{34} \ROP e_{30},\]
    \[e_{18} \ROP e_{29}, ~~e_{18} \ROP e_{30}, ~~e_{28} \ROP e_{29}, \text{~~and~~} e_{28} \ROP e_{30}.\]
\end{example}

In the remainder, we will write $\overset{\X}{\sim}$ to denote feature relations in general, following Def.~\ref{def:feature-relation} if $|\X|=1$ and Def.~\ref{def:derived-feature-relation} if $|\X|=2$.

\begin{definition}[\dfpath~generator]
    Given a partial order $D\coloneqq(E,\prec)$ and model $\hyp$, then the \dfpath~\emph{generator} is
    
    \begin{equation*}
        g_\hyp(D) \coloneqq \left\{~(a, b)~\mid~ a \prec b ~\land\ \exists \X\in\hyp.~a \overset{\X}{\sim}b~\right\}
    \end{equation*}
    \label{def:dfpath-generator}
\end{definition}

This means that order relations in $D$ are included in $g_\hyp(D)$ if the elements are also feature-related w.r.t. some feature in model $\hyp$.

\begin{example}
    Letting $\hyp \coloneqq \{\{\texttt{Invoice}\}\}$, denoting by $D$ the event table in Tab.~\ref{tab:event_table}
    \[
        g_\hyp(D) = \{(e_5,e_9),(e_9,e_{30}),(e_5,e_{30}),(e_{18},e_{30})\}
    \]
    The application of the \dfpath~generator to the model $\{\{\texttt{Order}\},\{\texttt{Invoice}\}\}$ is depicted in Fig.~\ref{fig:poset-running}, with transitive relations omitted for readability.
\end{example}

\subsection{Linear extensions as outcome space}

We now show that regardless of whether EKGs are intended to be seen as process models, many hypothetical orderings of the same set of events map to the same set of order relations (i.e.,~\dfpath s) via $g_\hyp$, meaning the \emph{preimage} of $g_\hyp$ is formally an outcome space.
This is illustrated in Fig.~\ref{fig:df-path-generator}.

\begin{proposition}
    Given posets $D \coloneqq (E, \prec)$ and $D'\coloneqq (E, \prec')$ over the same set $E$, a model $\hyp(E)$, and \dfpath~generator $g_\hyp$, then $D$ extends $g_\hyp(D')$ and $D'$ extends $g_\hyp(D)$ if and only if $g_\hyp(D) = g_\hyp(D')$.
\end{proposition}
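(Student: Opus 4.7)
The proof plan relies on two simple observations about $g_\hyp$. First, Def.~\ref{def:dfpath-generator} makes it immediate that $g_\hyp(D) \subseteq~\prec$: every pair returned by the generator is, by construction, required to satisfy $a \prec b$. Symmetrically $g_\hyp(D') \subseteq~\prec'$. Second, the feature relation $\overset{\X}{\sim}$, whether atomic (Def.~\ref{def:feature-relation}) or derived (Def.~\ref{def:derived-feature-relation}), is defined purely in terms of the feature values attached to events and never references the ordering. Hence the statement ``$a \overset{\X}{\sim} b$ for some $\X \in \hyp$'' evaluates identically regardless of whether one is working with $D$ or with $D'$.

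For the ``if'' direction, I would assume $g_\hyp(D) = g_\hyp(D')$ and invoke the first observation: $g_\hyp(D') = g_\hyp(D) \subseteq~\prec$, which is precisely the statement that $D$ extends $g_\hyp(D')$. An identical argument with roles swapped delivers the other containment.

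For the ``only if'' direction, I would take an arbitrary $(a,b) \in g_\hyp(D)$ and unpack the definition: $a \prec b$ and there exists $\X \in \hyp$ with $a \overset{\X}{\sim} b$. The assumption that $D'$ extends $g_\hyp(D)$ gives $g_\hyp(D) \subseteq~\prec'$, so $a \prec' b$; combined with the ordering-independent witness $\X$ from the second observation, this places $(a,b) \in g_\hyp(D')$. Swapping the roles of $D$ and $D'$ yields the reverse inclusion, and the two generators coincide.

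I do not anticipate a genuine obstacle: both directions reduce to unwinding definitions and invoking the two observations above. The one subtlety worth flagging in the written proof is that $g_\hyp(D)$ need not itself be a partial order (transitivity can fail whenever the intermediate feature witness required by Def.~\ref{def:derived-feature-relation} is absent for a transitively ordered triple), so ``$D$ extends $g_\hyp(D')$'' must be read set-theoretically as the relational containment $g_\hyp(D') \subseteq~\prec$ rather than as membership in $\ext{\cdot}$ in the strict linear-extension sense.
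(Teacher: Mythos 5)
Your proof is correct, and the ``if'' direction coincides with the paper's. The ``only if'' direction, however, is organized differently: the paper argues by contradiction, picking an element of $g_\hyp(D)~\triangle~g_\hyp(D')$ and doing a case analysis on whether the offending pair is reversed or incomparable in the other order, whereas you give a direct element-chasing argument establishing the two containments $g_\hyp(D) \subseteq g_{\hyp}(D')$ and $g_\hyp(D') \subseteq g_{\hyp}(D)$ separately from the two extension hypotheses. Your version is arguably cleaner: it avoids the paper's case split, and it makes explicit the key fact that the paper uses only tacitly, namely that $\overset{\X}{\sim}$ is defined purely from feature values and is therefore invariant under changing the order, so that membership in $g_\hyp(\cdot)$ can differ between $D$ and $D'$ only through the $a \prec b$ conjunct of Def.~\ref{def:dfpath-generator}. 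Your closing remark that ``extends'' must be read as the relational containment $g_\hyp(D') \subseteq~\prec$ (since $g_\hyp(D')$ need not be transitive) is a legitimate subtlety the paper glosses over; note that the two readings agree here anyway because $\prec$ is transitive, so containment of $g_\hyp(D')$ and of its transitive closure in $\prec$ are equivalent. The paper's extra sentence about antisymmetry in the ``if'' direction is subsumed by your set-theoretic reading of extension.
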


\begin{proof}
    First, assume $g_\hyp(D) = g_\hyp(D')$ and denote this set $g_\hyp$ for convenience.
    That $D$ and $D'$ extend $g_\hyp$ follows immediately since for all $(a,b) \in g_\hyp$, we know that $a \prec b$ in $D$ and $D'$, by Def.~\ref{def:dfpath-generator}.
    Furthermore, any $(c,d)$ in $D$ or $D'$ and not in $g_\hyp$, we know that $(d,c) \notin g_\hyp$ by the antisymmetry property of partial orders.

    Second, assume that $D$ and $D'$ extend $g_\hyp(D')$ respectively $g_\hyp(D)$, but that $g_\hyp(D') \neq g_\hyp(D)$.
    This means there exists some $(a,b) \in g_\hyp(D)~\triangle~g_\hyp(D')$.
    It follows from Def.~\ref{def:dfpath-generator} that $a \nprec b$ in $D$ or $D'$ either because $b \prec a$, in which case $(b,a) \in g_\hyp(D)~ \triangle~g_\hyp(D')$, or because $a$ and $b$ are incomparable, meaning either $D$ does not extend $g_\hyp(D')$ or $D'$ does not extend $g_\hyp(D)$, and we have a contradiction.
\end{proof}

\section{A Probabilistic (Bayesian) Formulation}\label{sec:a-probabilistic-formulation}
    \renewcommand{\arraystretch}{1.15}
\begin{table*}[ht]
    \caption{Examples of various combinations of features, their $\eta$-based model prior $\p(\hyp)$ (see Def.~\ref{def:model-prior}), and the resulting poset. The model $\{\{X_1\}\}$ is degenerate because it maps exactly to original total order. The model $\{\{X_2\}\}$ has a high prior due to its high entropy, and induces a moderately restrictive order. Adding $\{X_3\}$ to this model adds no new structure to the order, but lowers the prior. In contrast, adding $\{X_4\}$ connects previously disjoint components, improving the maximum likelihood and potentially outweighing the decrease in $\p(\hyp)$.}
    \centering
    \begin{tabular}{c|c|c|c|c|r|c|c|c|c|c|c}
        \hline
        \multicolumn{5}{c}{Event Table} & \multicolumn{1}{c}{$\Hnorm$-based Model Prior $\p(\hyp)$ (numerator)} & \multicolumn{6}{c}{Induced Posets} \\
        \hline
        & $X_1$ & $X_2$ & $X_3$ & $X_4$ & & ${X_1}$ & ${X_2}$ & ${X_3}$ & ${X_4}$  & ${X_{2,3}}$ & ${X_{2,4}}$ \\
        \hline
            \vspace{-9pt} & & & & & & & & & & & \\
            $e_1$ & A & B & F &   & \multirow{2}*{$\Hnorm(X_1) = \frac{-1 \log 1}{\log 8+1} = 0$} & \multirow{8}*{\begin{tikzpicture}[->]
    
    \tikzstyle{event}=[circle,draw,minimum size=6pt,inner sep=0.5pt]
    \node[event] (e1) at (0,3.5) {};
    \node[event] (e2) at (0,3.0) {};
    \node[event] (e3) at (0,2.5) {};
    \node[event] (e4) at (0,2.0) {};
    \node[event] (e5) at (0,1.5) {};
    \node[event] (e6) at (0,1.0) {};
    \node[event] (e7) at (0,0.5) {};
    \node[event] (e8) at (0,0.0) {};

    \draw (e1) -- (e2);
    \draw (e2) -- (e3);
    \draw (e3) -- (e4);
    \draw (e4) -- (e5);
    \draw (e5) -- (e6);
    \draw (e6) -- (e7);
    \draw (e7) -- (e8);
    
\end{tikzpicture}} & \multirow{8}*{\begin{tikzpicture}[->]
    
    \tikzstyle{event}=[circle,draw,minimum size=6pt,inner sep=0.5pt]
    \node[event] (e1) at (0,3.5) {};
    \node[event] (e2) at (0,3.0) {};
    \node[event] (e3) at (0,2.5) {};
    \node[event] (e4) at (0,2.0) {};
    \node[event] (e5) at (0,1.5) {};
    \node[event] (e6) at (0,1.0) {};
    \node[event] (e7) at (0,0.5) {};
    \node[event] (e8) at (0,0.0) {};

    \draw (e1) -- (e2);
    \draw (e3) -- (e4);
    \draw (e5) -- (e6);
    \draw (e7) -- (e8);
    
\end{tikzpicture}} & \multirow{8}*{\begin{tikzpicture}[->]
    
    \tikzstyle{event}=[circle,draw,minimum size=6pt,inner sep=0.5pt]
    \node[event] (e1) at (0,3.5) {};
    \node[event] (e2) at (0,3.0) {};
    \node[event] (e3) at (0,2.5) {};
    \node[event] (e4) at (0,2.0) {};
    \node[event] (e5) at (0,1.5) {};
    \node[event] (e6) at (0,1.0) {};
    \node[event] (e7) at (0,0.5) {};
    \node[event] (e8) at (0,0.0) {};

    \draw (e1) -- (e2);
    \draw (e5) -- (e6);
    
\end{tikzpicture}} & \multirow{8}*{\begin{tikzpicture}[->]
    
    \tikzstyle{event}=[circle,draw,minimum size=6pt,inner sep=0.5pt]
    \node[event] (e1) at (0,3.5) {};
    \node[event] (e2) at (0,3.0) {};
    \node[event] (e3) at (0,2.5) {};
    \node[event] (e4) at (0,2.0) {};
    \node[event] (e5) at (0,1.5) {};
    \node[event] (e6) at (0,1.0) {};
    \node[event] (e7) at (0,0.5) {};
    \node[event] (e8) at (0,0.0) {};

    \draw (e2) -- (e3);
    \draw (e6) -- (e7);
    
\end{tikzpicture}} & \multirow{8}*{\begin{tikzpicture}[->]
    
    \tikzstyle{event}=[circle,draw,minimum size=6pt,inner sep=0.5pt]
    \node[event] (e1) at (0,3.5) {};
    \node[event] (e2) at (0,3.0) {};
    \node[event] (e3) at (0,2.5) {};
    \node[event] (e4) at (0,2.0) {};
    \node[event] (e5) at (0,1.5) {};
    \node[event] (e6) at (0,1.0) {};
    \node[event] (e7) at (0,0.5) {};
    \node[event] (e8) at (0,0.0) {};

    \draw (e1) -- (e2);
    \draw (e3) -- (e4);
    \draw (e5) -- (e6);
    \draw (e7) -- (e8);
    
\end{tikzpicture}} & \multirow{8}*{\begin{tikzpicture}[->]
    
    \tikzstyle{event}=[circle,draw,minimum size=6pt,inner sep=0.5pt]
    \node[event] (e1) at (0,3.5) {};
    \node[event] (e2) at (0,3.0) {};
    \node[event] (e3) at (0,2.5) {};
    \node[event] (e4) at (0,2.0) {};
    \node[event] (e5) at (0,1.5) {};
    \node[event] (e6) at (0,1.0) {};
    \node[event] (e7) at (0,0.5) {};
    \node[event] (e8) at (0,0.0) {};

    \draw (e1) -- (e2);
    \draw (e2) to [bend left=70] (e3);
    \draw (e3) -- (e4);
    \draw (e5) -- (e6);
    \draw (e6) to [bend left=70] (e7);
    \draw (e7) -- (e8);
    
\end{tikzpicture}}  \\
            $e_2$ & A & B & F & H &  & & & & &\\
            $e_3$ & A & C &   & H & \multirow{2}*{$\Hnorm(X_2) = \frac{-\frac{4}{4} \log \frac{1}{4}}{\log 8 + 1} = \frac{1}{2}$} & & & & &\\
            $e_4$ & A & C &   &   &  & & & & &\\
            $e_5$ & A & D & G &   & \multirow{2}*{$\Hnorm(X_3)=\Hnorm(X_4)=\frac{-\frac{1}{2} \log \frac{1}{2} -\frac{1}{2} \log \frac{1}{4}}{\log 8 + 1} = \frac{3}{8}$} & & & & &\\
            $e_6$ & A & D & G & I & & & & & &\\
            $e_7$ & A & E &   & I & \multirow{2}*{$\Hnorm(X_2) \Hnorm(X_3) = \Hnorm(X_2) \Hnorm(X_4) = \frac{3}{16}$} & & & & &\\
            $e_8$ & A & E &   &   & & & & & & \\
        \hline
    \end{tabular}
    \label{tab:model-prior-examples}
\end{table*}
\renewcommand{\arraystretch}{1}

Our aim is to identify the set of features $\hyp$ that provides the most information about the observed sequence of events w.r.t.~the poset induced by $\hyp$ via \dfpath~generator $g_\hyp$.
We now formulate this problem at a high level in probabilistic terms.
We seek to find the model $\hyp$ which is most strongly indicated by our data $D$, i.e.:~ $\argmax_{\hyp}~\p(\hyp\mid D)$.
The term $\p(\hyp\mid D)$ is the posterior probability of model $\hyp$ after observing data $D$; by Bayes' rule, this can be rewritten as:

\begin{equation*}
    \argmax_{\hyp}~\p(\hyp\mid D) = \argmax_{\hyp}~\frac{\p(\hyp)~\p(D\mid \hyp)}{\p(D)}
    \label{eq:equation}
\end{equation*}

where $\p(D\mid\hyp)$ is the \emph{likelihood} of $\hyp$ while $\p(\hyp)$ is its \emph{prior} probability, and $\p(D)$ is the \emph{evidence}.
Once $D$ has been observed it is fixed, and it will turn out to cancel out in our final, posterior-odds objective function for comparing models (see Sec.~\ref{subsec:objective-function}).
Later we will see that this Bayesian formulation of our task allows the natural incorporation of an important component to our objective function: the prior probability $\p(\hyp)$, which allows us to incorporate beliefs regarding models such that we avoid degenerate solutions.
    
    \subsection{Maximum likelihood}\label{subsec:maximum-likelihood}
        Recall that the outcome space of model $\hyp$ w.r.t.~observation $\Tobs$ is the set of linear extensions of the poset $P$ defined by the \dfpath~generator $g_\hyp(D)$.
We can then write the likelihood as $\p(\Tobs\mid P) = \p(\Tobs\mid\Tobs\in\ext{P})~\p(\Tobs\in\ext{P})$, i.e.,~the probability of selecting the observed sequence from all linear extensions of $P$.
In principle, this formulation allows us to consider posets for which $D$ is not -- or is not \emph{known to be} -- a linear extension of $P$, but by Def.~\ref{def:dfpath-generator} we have $\p(\Tobs\in\ext{P}) = 1$ by construction.
Assuming a uniform distribution over linear extensions $\ext{P}$ gives

\begin{equation}
    \p(\Tobs\mid\Tobs\in\ext{P})~\p(\Tobs\in\ext{P}) = \frac{1}{|\ext{P}|} \label{eq:likelihood2}
\end{equation}

The assumption of uniformity is common~\cite{huber_fast_2006,huber_near-linear_2014,karzanov_conductance_1991-1,mannila_global_2000}, though not universal~\cite{banks_using_2017}.
We will use Eq.~\ref{eq:likelihood2} as the basis for the likelihood component of our complete objective function described in Sec.~\ref{subsec:objective-function}.

On its own, maximum likelihood is not a sufficient criteria since the likelihood of data $D$ can always be increased with an increasingly narrowly fitted model~\cite{mackay_information_2003}.
In the next section we present a model prior which counterbalances the maximum likelihood term.

    \subsection{Model priors}\label{subsec:model-priors}
        It is common when working with parametric models to assume a uniform distribution over \emph{models}, though not necessarily the parameter space.
Typically a component of the likelihood called the \emph{Occam factor} helps maintain balance between maximizing likelihood and avoiding overfitted parametrizations~\cite{mackay_information_2003}.
Our models are non-parametric, and using uniform model priors would result in our objective function being reduced to likelihood ($1/|\ext{P}|$) alone, resulting in degenerate models in some circumstances.
See Tab.~\ref{tab:model-prior-examples} for an example.
Such a degenerate solution is akin to a clustering task in which each datapoint is assigned to its own cluster.

Typically, overly simplistic models will \emph{under}fit the data.
That is, $\p(D\mid\hyp)$ will tend be low for seen as well as unseen data since the model's probability mass is spread thin across such a large outcome space.
In contrast, our task has an idiosyncratic property that extremely trivial models may perform well in terms of likelihood, despite being completely \emph{uninformative}.
For example, a feature with one outcome across all events would induce the total order $\Tobs$, both on seen and likely on unseen data as well.
This is akin to a prediction task in which a copy of the target variable has unknowingly been included in the data, leading to a trivial classification model.

\begin{figure*}
    \centering
\begin{tikzpicture}[->]

    \tikzstyle{event}=[circle,draw,fill=gray!70,minimum size=5pt,inner sep=0.5pt]

    \def\x{1}
    \def\y{0}

    \draw[fill=gray!10] (-0.15+\x,-0.15+\y) rectangle (0.15+\x,0.6+\y);
    \draw[fill=gray!10] (0.25+\x,-0.15+\y) rectangle (0.95+\x,0.95+\y);

    \node[event] (e7) at (0+\x,0.4+\y) {};
    \node[event] (e34) at (0+\x,0+\y) {};

    \node[event] (e9) at (0.4+\x,0.4+\y) {};
    \node[event] (e30) at (0.4+\x,0+\y) {};

    \node[event] (e1) at (0.8+\x,0.8+\y) {};
    \node[event] (e18) at (0.8+\x,0.4+\y) {};
    \node[event] (e28) at (0.8+\x,0+\y) {};

    \draw (e7) -- (e34);

    \draw (e1) -- (e18);
    \draw (e18) -- (e28);

    \draw (e9) -- (e30);
    \draw (e18) -- (e30);

    \def\x{2.75}
    \def\y{0}

    \draw[fill=gray!10] (-0.15+\x,-0.15+\y) rectangle (0.95+\x,0.95+\y);

    \node[event] (e5) at (0+\x,0.8+\y) {};
    \node[event] (e7) at (0+\x,0.4+\y) {};
    \node[event] (e34) at (0+\x,0+\y) {};

    \node[event] (e9) at (0.4+\x,0.4+\y) {};
    \node[event] (e30) at (0.4+\x,0+\y) {};

    \node[event] (e18) at (0.8+\x,0.4+\y) {};
    \node[event] (e28) at (0.8+\x,0+\y) {};

    \draw (e5) -- (e7);
    \draw (e7) -- (e34);

    \draw (e18) -- (e28);

    \draw (e5) -- (e9);
    \draw (e9) -- (e30);
    \draw (e18) -- (e30);

    \def\x{4.5}
    \def\y{0}

    \draw[fill=gray!10] (-0.15+\x,-0.15+\y) rectangle (0.55+\x,1.35+\y);
    \draw[fill=gray!10] (0.65+\x,-0.15+\y) rectangle (0.95+\x,0.15+\y);

    \node[event] (e2) at (0+\x,1.2+\y) {};
    \node[event] (e5) at (0+\x,0.8+\y) {};
    \node[event] (e7) at (0+\x,0.4+\y) {};
    \node[event] (e34) at (0+\x,0+\y) {};

    \node[event] (e9) at (0.4+\x,0.4+\y) {};
    \node[event] (e30) at (0.4+\x,0+\y) {};

    \node[event] (e28) at (0.8+\x,0+\y) {};

    \draw (e2) -- (e5);
    \draw (e5) -- (e7);
    \draw (e7) -- (e34);

    \draw (e5) -- (e9);
    \draw (e9) -- (e30);

    \def\x{1.75}
    \def\y{1.8}

    \draw[fill=gray!10] (-0.15+\x,-0.15+\y) rectangle (0.95+\x,0.95+\y);

    \node[event] (e5) at (0+\x,0.8+\y) {};
    \node[event] (e7) at (0+\x,0.4+\y) {};
    \node[event] (e34) at (0+\x,0+\y) {};

    \node[event] (e9) at (0.4+\x,0.4+\y) {};
    \node[event] (e30) at (0.4+\x,0+\y) {};

    \node[event] (e1) at (0.8+\x,0.8+\y) {};
    \node[event] (e18) at (0.8+\x,0.4+\y) {};
    \node[event] (e28) at (0.8+\x,0+\y) {};

    \draw (e5) -- (e7);
    \draw (e7) -- (e34);

    \draw (e1) -- (e18);
    \draw (e18) -- (e28);

    \draw (e5) -- (e9);
    \draw (e9) -- (e30);
    \draw (e18) -- (e30);

    \def\x{3.75}
    \def\y{1.8}

    \draw[fill=gray!10] (-0.15+\x,-0.15+\y) rectangle (0.95+\x,1.35+\y);

    \node[event] (e2) at (0+\x,1.2+\y) {};
    \node[event] (e5) at (0+\x,0.8+\y) {};
    \node[event] (e7) at (0+\x,0.4+\y) {};
    \node[event] (e34) at (0+\x,0+\y) {};

    \node[event] (e9) at (0.4+\x,0.4+\y) {};
    \node[event] (e30) at (0.4+\x,0+\y) {};

    \node[event] (e18) at (0.8+\x,0.4+\y) {};
    \node[event] (e28) at (0.8+\x,0+\y) {};

    \draw (e2) -- (e5);
    \draw (e5) -- (e7);
    \draw (e7) -- (e34);

    \draw (e18) -- (e28);

    \draw (e5) -- (e9);
    \draw (e9) -- (e30);
    \draw (e18) -- (e30);

    \def\x{2.75}
    \def\y{3.6}

    \draw[fill=gray!10] (-0.15+\x,-0.15+\y) rectangle (0.95+\x,1.35+\y);
    \draw[fill=gray!10] (1.05+\x,-0.15+\y) rectangle (1.35+\x,1.35+\y);

    \node[event] (e2) at (0+\x,1.2+\y) {};
    \node[event] (e5) at (0+\x,0.8+\y) {};
    \node[event] (e7) at (0+\x,0.4+\y) {};
    \node[event] (e34) at (0+\x,0+\y) {};

    \node[event] (e9) at (0.4+\x,0.4+\y) {};
    \node[event] (e30) at (0.4+\x,0+\y) {};

    \node[event] (e1) at (0.8+\x,0.8+\y) {};
    \node[event] (e18) at (0.8+\x,0.4+\y) {};
    \node[event] (e28) at (0.8+\x,0+\y) {};

    \node[event] (e3) at (1.2+\x,1.2+\y) {};
    \node[event] (e4) at (1.2+\x,0.9+\y) {};
    \node[event] (e6) at (1.2+\x,0.6+\y) {};
    \node[event] (e19) at (1.2+\x,0.3+\y) {};
    \node[event] (e29) at (1.2+\x,0.0+\y) {};

    \draw (e2) -- (e5);
    \draw (e5) -- (e7);
    \draw (e7) -- (e34);

    \draw (e1) -- (e18);
    \draw (e18) -- (e28);

    \draw (e5) -- (e9);
    \draw (e9) -- (e30);
    \draw (e18) -- (e30);

    \draw[dashed,->] (3.4,3.45) -- (2.75,2.8);
    \draw[dashed,->] (3.4,3.45) -- (4,3.25);
    \draw[dashed,->] (2.35,1.6) -- (1.65,1);
    \draw[dashed,->] (2.35,1.6) -- (3.3,1);
    \draw[dashed,->] (4.35,1.6) -- (3.5,1);
    \draw[dashed,->] (4.35,1.6) -- (4.75,1.4);

    \node at (2.25,4.2) {$P$};
    \node at (3.15,4.75) {$Q$};
    \node at (4.35,4.75) {$F$};
    \node at (2.15,2.95) {$R$};
    \node at (4.15,2.95) {$S$};
    \node at (1.0,0.75) {$T$};
    \node at (1.45,0.75) {$U$};
    \node at (3.15,0.75) {$V$};
    \node at (4.85,1.15) {$W$};
    \node at (5.3,0.35) {$X$};

    \node[dotted,draw,align=left,text width=0.78\textwidth] (lower1) at (-1.5,4.2) {$\begin{aligned}|\ext{P}| &= \tbinom{13}{7}|\ext{F}||\ext{Q}| && \!\!\!\!\!\!= \tbinom{13}{7} 5! |\ext{Q}| \\ &\leq \tbinom{14}{7} 5!9! &&~~~~~~~~~~~~~~~~~~~~\approx 1.5 \times 10^{11} \\ &\geq \tbinom{14}{7} 5! &&~~~~~~~~~~~~~~~~~~~~\approx 4.1 \times 10^{5}\end{aligned}$};
    \node[dotted,draw,align=left,text width=0.78\textwidth] (lower2) at (-1.5,2.4) {$\begin{aligned}|\ext{P}| &= \tbinom{14}{7} 5!(|\ext{R}|+|\ext{S}|) \\ &\leq \tbinom{14}{7} 5!(8!+8!) &&~~~~~~~~~~~~\approx 3.3 \times 10^{10} \\ &\geq \tbinom{14}{7} 5!(1+1) ~&&~~~~~~~~~~~~\approx 8.2 \times 10^{5} \end{aligned}$};
    \node[dotted,draw,align=left,text width=0.78\textwidth] (lower3) at (-1.5,0.6) {$\begin{aligned}|\ext{P}| &= \tbinom{14}{7}5!\big(\tbinom{7}{2}|\ext{T}||\ext{U}| + 2|\ext{V}| + \tbinom{7}{1}|\ext{W}||\ext{X}|\big) \\ &\leq \tbinom{14}{7}5!\big(\tbinom{7}{2}2!5! + 2\cdot7! + \tbinom{7}{1}6!\big) &&\!\!\!\!\!\!\!\!\!\!\!\!\!\!\!\!\!\!\!\!\!\!\!\!\!\!\!\!\!\!\!\!\!\!\!\!\!\approx 8.3 \times 10^{9} 
 \\ &\geq \tbinom{14}{7}5!\big(\tbinom{7}{2} + 2 + \tbinom{7}{1}\big) ~~~~~~~~~~~~&&\!\!\!\!\!\!\!\!\!\!\!\!\!\!\!\!\!\!\!\!\!\!\!\!\!\!\!\!\!\!\!\!\!\!\!\!\!\approx 2.1 \times 10^{7} \end{aligned}$};

\end{tikzpicture}
    \caption{Repeated decomposition with bounds from Cor.~\ref{cor:bounds-disjoint} and~\ref{cor:bounds-minelement}. First, for free events $F$ we establish $|\ext{F}| = 5!$, then by disjoint decomposition on $F$ and $Q$ we establish our first bounds. Next, minimal element decomposition is applied to establish tighter bounds using $R$ and $S$.
    Finally, first minimal element then disjoint decomposition is applied resulting in posets $T,U,V,W,X$.}
    \label{fig:decomposition}
\end{figure*}

This pitfall can be avoided by explicitly expressing a preference against both degenerate and unnecessarily complex models.
We propose one such distribution based on the product of the normalized entropies for each feature in the model.
The per-feature score penalizes features with distributions that are deterministic, while taking the product of the $[0,1)$ bounded scores across features discourages the addition of superfluous features.

\vspace{50pt}

\begin{definition}[Normalized Shannon entropy] Let $\Omega_X$ denote the outcome space of discrete random variable $X$, and $H(X)$
the standard Shannon entropy of $X$.
The \emph{normalized Shannon entropy} of $X$ is
    \begin{equation*}
        \Hnorm(X) \coloneqq \frac{\entropy(X)}{1+\max \entropy(X)} = -\frac{\sum_{x\in\Omega_X}\p(x)\log \p(x)}{1+\log |\Omega_X|}
    \end{equation*}
\end{definition}
That is, $\Hnorm(X)\in[0,1)$ is the ratio of the entropy of $X$ to its upper bound with a padding factor to guarantee $\Hnorm(X) < 1$.
It ranges from a deterministic/degenerate (zero surprise) to a uniform (maximal surprise) distribution.

\begin{definition}[$\Hnorm$-based model prior]
    Given a \emph{model} $\hyp\subseteq\dom$, the normalized Shannon entropy prior for $\hyp$ is
    \begin{equation}
        \p(\hyp) \coloneqq \frac{\prod_{\X\in\hyp}\Hnorm(\X)}{\sum_{\hyp\subseteq\dom}\prod_{\X\in\hyp}\Hnorm(\X)}\label{eq:model-prior}
    \end{equation}
    where the denominator is a normalizing constant to ensure $\sum_\hyp\p(\hyp) = 1$.
    \label{def:model-prior}
\end{definition}

Due to the product in Eq.~\ref{eq:model-prior}, any model $\hyp$ containing a trivial feature $\X$ s.t.~$\Hnorm(\X) \approx 0$ will have a prior $\p(\hyp)\approx0$.
Similarly, since $\Hnorm(\X) < 1$ the product will penalize each additional feature.
Additional features will need to improve the likelihood function enough to outweigh the decrease in model prior.
Tab.~\ref{tab:model-prior-examples} shows examples of the effect of outcome distribution on the $\eta$-based model prior.

    \subsection{Posterior odds objective function}\label{subsec:objective-function}
        Combining these components gives a posterior which balances likelihood against our preference for parsimonious and informative models.
For readability we write $\ext{\hyp}^D$ in place of $\ext{g_\hyp(D)}$, the linear extensions of the poset defined by $\hyp$ on $D$.
\begin{align}
    \p(\hyp\mid \Tobs) &=~\frac{\overbrace{\p(\hyp)}^{\text{Eq.~\ref{eq:model-prior}}}~\overbrace{\p(\Tobs\mid\hyp)}^{\text{Eq.~\ref{eq:likelihood2}}}}{\p(\Tobs)} \\
    &= ~\frac{1}{\p(\Tobs)}~\frac{\prod_{\X\in\hyp}\Hnorm(\X)}{\sum_{\hyp\subseteq\dom}\prod_{\X\in\hyp}\Hnorm(\X)}~\frac{1}{|\ext{\hyp}^D|}\label{eq:objective-function}
\end{align}

For multiple independent observations,~$\mathcal{D} \coloneq \{D_1,\ldots,D_N\}$ of the same process, and denoting by $\ext{\hyp}^{D_i}$ the linear extensions of the poset induced by $\hyp$ on observation $D_i$, we can rewrite~Eq.\ref{eq:objective-function} by conditional independence as:

\begin{align*}
    \p(\hyp\mid \mathcal{D}) &=~\frac{\p(\hyp)\prod_{i=1}^N\p(D_i\mid\hyp)}{\p(\mathcal{D})}\\ 
    &= \frac{1}{\p(\mathcal{D})}\frac{\prod_{\X\in\hyp}\Hnorm(\X)}{\sum_{\hyp\subseteq\dom}\prod_{\X\in\hyp}\Hnorm(\X)}~\prod_{i=1}^N\frac{1}{|\ext{\hyp}^{D_i}|}
\end{align*}

Since we are only interested in the \emph{relative} fitness of models, not the exact posterior, it suffices to calculate \emph{posterior odds}, in which case both $\p(\mathcal{D})$ and the normalizing constant stemming from the denominator of~Eq.~\ref{eq:objective-function} cancel:

\begin{align*}
    \frac{\p(\hyp_1\mid \mathcal{D})}{\p(\hyp_2\mid \mathcal{D})} &=\frac{\p(\hyp_1)~\p(\mathcal{D}\mid\hyp_1)}{\p(\hyp_2)~\p(\mathcal{D}\mid\hyp_2)}\\
&=\frac{\prod_{\X\in\hyp_1}\Hnorm(\X)\prod_{i=1}^N|\ext{\hyp_2}^{D_i}|}{\prod_{\X\in\hyp_2}\Hnorm(\X)\prod_{i=1}^N|\ext{\hyp_1}^{D_i}|}
\end{align*}

Furthermore, it suffices to determine if $\p(\hyp_1\mid\mathcal{D}) < \p(\hyp_2\mid\mathcal{D})$ when comparing models, allowing us to take advantage of bounds on the individual terms.

\section{Branch and Bound for Model Selection}\label{sec:strategies-bounds-and-model-selection}
\begin{algorithm*}[ht]
\caption{Bound linear extension count}\label{alg:ext-bounds}
\vspace{5pt}
 \hspace*{\algorithmicindent} \textbf{Input}: $G$ (directed acyclic graph representing a poset), $stopping\_condition$ (e.g.~time elapsed or max iterations)\\
 \hspace*{\algorithmicindent} \textbf{Output}: $l, u$ (upper and lower bounds on a linear extension count of $G$)
\vspace{5pt}
\begin{spacing}{1.08}
\begin{algorithmic}[1]
\Procedure{BoundExtensions}{$G$, $stopping\_condition$}
\If{$stopping\_condition$}\Comment{$\triangleright$ If stopping condition is reached, return naïve bounds}
\State \Return{$1, |G|!$} \label{alg:ext-bounds:stopping-condition}
\Else
    \State $C \gets \textsc{ConnectedComponents}(G)$
    \State $F := \{~\{v\} \mid (\{v\},\emptyset) \in C~\}$ \Comment{$\triangleright$ The set of free vertices~~~~~~~~~~~~~~~~~~}
    \If{$|C| = |F|$} \Comment{$\triangleright$ If graph is edgeless or empty, return exact count}
    \State \Return $|F|!$, $|F|!$
    \ElsIf{$|C| = 1$ and $|F| = 0$} \Comment{$\triangleright$ Connected graph: apply minimal element decomposition}
        \State $l_s, u_s \gets 0, 0$
        \For{$v \in \min G$}
            \State $l, u \gets \textsc{BoundExtensions}(G \setminus \{v\})$
            \State $l_s, u_s \gets l_s + l, u_s + u$ \Comment{$\triangleright$ Sum from Thm.~\ref{thm:minimal}}
        \EndFor
        \State \Return $l_s, u_s$
    \Else \Comment{$\triangleright$ Two or more connected graphs: apply disjoint decomposition}
        \If{$F \neq \emptyset$}
            \State $l_1, u_1 \gets |F|!, |F|!$
            \State $l_2, u_2 \gets \textsc{BoundExtensions}(C \setminus F)$
            \State $k \gets |F|$
        \Else
            \State $H \gets $ some component in $C$
            \State $l_1, u_1 \gets \textsc{BoundExtensions}(H)$
            \State $l_2, u_2 \gets \textsc{BoundExtensions}(G \setminus H)$
            \State $k \gets |H|$
        \EndIf
        \State \Return $\binom{|G|}{k} \cdot l_1 \cdot l_2, \binom{|G|}{k} \cdot u_1, u_2$ \Comment{$\triangleright$ Formula from Thm.~\ref{thm:disjoint}}
    \EndIf
\EndIf
\EndProcedure
\end{algorithmic}
\end{spacing}
\end{algorithm*}

Exact computation of the linear extension count (the denominator  in Eq.~\ref{eq:likelihood2}) can be very expensive, depending on the structure of the partial order.
However, we can use several properties of partial orders and linear extension count to construct a greedy, divide-and-conquer strategy for establishing bounds and incorporate this into a branch-and-bound algorithm for model selection.

\begin{algorithm*}[ht]
\caption{Expand or prune model node (\emph{for one sample $D$})}\label{alg:bnb}
\vspace{5pt}
\hspace*{\algorithmicindent} \textbf{Input}: $\hyp$ (a model), $D$ (data sample), $best\_score$ (current best score)\\
 \hspace*{\algorithmicindent} \textbf{Output}: Whether to prune or expand subspace resulting from $\hyp$.
 \vspace{5pt}
 
\begin{spacing}{1.08}
\begin{algorithmic}[1]
\Procedure{ExpandOrPrune}{$\hyp, D, best\_score$}
\State $G^* \gets \textsc{BuildPoset}(D,\hyp^*)$\Comment{$\triangleright$ Most restrictive reachable poset, see Eq.~\ref{eq:model-closure}}
\State $l^*, u^* \gets \textsc{BoundExtensions}(G^*)$
\If {$\frac{\p(\hyp)}{l^*} < best\_score$} \Comment{$\triangleright$ Upper bound on achievable score in subspace, see Cor.~\ref{cor:best-possible}}\label{alg:bnb:prune} 
\State \Return $prune$
\Else
    \State $G \gets \textsc{BuildPoset}(D,\hyp)$
    \State $l, u \gets \textsc{BoundExtensions}(G, stopping\_condition)$
    \If{$\frac{\p(\hyp)}{u} > best\_score$} \Comment{$\triangleright$ $\hyp$ is provably better: lower bound higher than current best\label{alg:bnb:better}}
        \State $best\_score \gets \frac{\p(\hyp)}{u}$
        \If{$l = u$} 
        \State Record $\hyp$ as current best model
        \Else 
        \State Add $\hyp$ to models to re-estimate later
        \EndIf
    \ElsIf{$\frac{\p(\hyp)}{\log(l)} > best\_score > \frac{\p(\hyp)}{\log(u)}$} \Comment{$\triangleright$ $\hyp$ might be better: bounds straddle current best} \label{alg:bnb:maybe}
        \State Add $\hyp$ to models to re-estimate later
    \EndIf \Comment{$\triangleright$ $\hyp$ is provably worse: upper bound lower than current best} \label{alg:bnb:worse}
    \State \Return $expand$
\EndIf
\EndProcedure
\end{algorithmic}
\end{spacing}
\end{algorithm*}

\subsection{Bounding linear extension count}\label{subsec:bounding-linear-extension-count}
Let $P = (E,\prec)$ be a poset. If $P$ is a total order, there is only a single linear extension, namely the order itself, and if $\prec~= \emptyset$, all permutations of $E$ are linear extensions. Hence, it is easy to see that
 $1 \leq |\ext{P}| \leq |E|!$
We can combine this naïve bound  with the following two rules for recursive decomposition of posets.
\begin{theorem}[Disjoint decomposition]
    Given poset $P$ partitioned by $\{P_1, P_2\}$ s.t.\ no $ a\in P_1$ is comparable to any $b \in P_2$, then
    \[|\ext{P}| = \binom{|P|}{|P_1|}|\ext{P_1}||\ext{P_2}|\]
    \label{thm:disjoint}
\end{theorem}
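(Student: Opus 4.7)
The plan is to establish the identity via a bijection between $\ext{P}$ and the Cartesian product of three sets: the collection of $|P_1|$-element subsets of $\{1,\ldots,|P|\}$ (of cardinality $\binom{|P|}{|P_1|}$), the set $\ext{P_1}$, and the set $\ext{P_2}$. The count of the product then yields the claimed formula.

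For the forward direction, I would encode a linear extension as a bijection $\sigma : \{1,\ldots,|P|\} \to E$ that respects $\prec$, and associate to it the triple $(S_\sigma, \sigma_1, \sigma_2)$ where $S_\sigma \coloneqq \sigma^{-1}(P_1)$, and $\sigma_i$ is the unique bijection from $\{1,\ldots,|P_i|\}$ to $P_i$ obtained by listing the $\sigma$-values at positions of $S_\sigma$ (resp.\ its complement) in increasing order of position. Because the restriction of $\prec$ to $P_i$ coincides with the partial order on $P_i$ and because $\sigma$ is order-preserving, each $\sigma_i$ is a linear extension of $P_i$, so the map is well defined.

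For the inverse direction, I would start from a triple $(S,\sigma_1,\sigma_2)$, and build $\sigma$ by assigning to the $k$-th smallest position of $S$ the value $\sigma_1(k)$, and to the $k$-th smallest position of the complement $\{1,\ldots,|P|\} \setminus S$ the value $\sigma_2(k)$. This is clearly a bijection onto $E$; the only nontrivial point is that it respects $\prec$. Here the hypothesis enters: if $a \prec b$ in $P$, then by incomparability of elements across the partition both $a$ and $b$ must lie in the same block $P_i$, so $\sigma_i^{-1}(a) < \sigma_i^{-1}(b)$ since $\sigma_i \in \ext{P_i}$, and by construction the global position of $a$ is smaller than that of $b$.

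Checking that these two maps are mutually inverse is routine. The main obstacle — and the \emph{only} place the incomparability hypothesis is invoked — is the verification in the inverse direction that any interleaving of a linear extension of $P_1$ with one of $P_2$ yields a valid linear extension of $P$; without the hypothesis, an arbitrary interleaving could place some $a \in P_1$ after some $b \in P_2$ despite $a \prec b$. Granted the hypothesis, the bijection is established, and multiplying the cardinalities of the three factors gives $|\ext{P}| = \binom{|P|}{|P_1|}|\ext{P_1}||\ext{P_2}|$. \qed
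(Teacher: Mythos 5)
Your proof is correct. Note that the paper does not actually prove this statement --- it defers entirely to a citation of M\"ohring --- so your bijective argument supplies a self-contained proof where the paper gives none. The interleaving bijection you construct (a choice of positions for $P_1$, together with a linear extension of each block) is the standard argument behind this identity, and you correctly isolate the one point where the incomparability hypothesis is needed: without it, an arbitrary interleaving of $\sigma_1 \in \ext{P_1}$ and $\sigma_2 \in \ext{P_2}$ could violate a cross-block relation $a \prec b$, so the inverse map would not land in $\ext{P}$. The forward direction (restriction of a linear extension to a subposet is a linear extension of that subposet) and the mutual-inverse check are indeed routine. Nothing is missing.
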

\begin{proof}
    See~\cite{mohring_computationally_1989}.
\end{proof}
\begin{corollary}
    Given poset $P$ partitioned by $\{P_1, P_2\}$ s.t.\ no $ a\in P_1$ is comparable to any $b \in P_2$, then by applying naïve bounds to Thm.~\ref{thm:disjoint} we can establish that
    
    \begin{equation*}
        \binom{|P|}{|P_1|} \leq \binom{|P|}{|P_1|}|\ext{P_1}||\ext{P_2}| \leq \binom{|P|}{|P_1|}|P_1|!|P_2|!
    \end{equation*}
    \label{cor:bounds-disjoint}
\end{corollary}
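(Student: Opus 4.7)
The plan is to reduce the corollary to a direct substitution into the identity of Theorem~\ref{thm:disjoint}, using the naïve bounds $1 \leq |\ext{Q}| \leq |Q|!$ that were established at the start of Sec.~\ref{subsec:bounding-linear-extension-count} for an arbitrary poset $Q$. Since no element of $P_1$ is comparable to any element of $P_2$, the partition $\{P_1, P_2\}$ satisfies the hypothesis of Theorem~\ref{thm:disjoint}, so $|\ext{P}| = \binom{|P|}{|P_1|}|\ext{P_1}||\ext{P_2}|$ exactly.

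First, I would instantiate the naïve bound on $P_1$ and $P_2$ separately, yielding $1 \leq |\ext{P_1}| \leq |P_1|!$ and $1 \leq |\ext{P_2}| \leq |P_2|!$. Multiplying the two chains term-by-term (permissible because all quantities are non-negative integers) gives $1 \leq |\ext{P_1}||\ext{P_2}| \leq |P_1|!\,|P_2|!$. Finally, I would multiply through by the strictly positive factor $\binom{|P|}{|P_1|}$, which preserves both inequalities, to obtain
\begin{equation*}
\binom{|P|}{|P_1|} \;\leq\; \binom{|P|}{|P_1|}|\ext{P_1}||\ext{P_2}| \;\leq\; \binom{|P|}{|P_1|}|P_1|!\,|P_2|!.
\end{equation*}
The middle expression equals $|\ext{P}|$ by Theorem~\ref{thm:disjoint}, which gives the desired sandwich on $|\ext{P}|$ and completes the argument.

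There is essentially no genuine obstacle here: the corollary is a bookkeeping consequence of Theorem~\ref{thm:disjoint} combined with the universal $1 \leq |\ext{\cdot}| \leq n!$ bound. The only point deserving explicit mention is that the naïve bounds are tight in the extremes (a chain realises the lower bound, an antichain the upper), which is why the inequality cannot in general be sharpened without additional structural information about $P_1$ and $P_2$; this observation justifies the later recursive strategy illustrated in Fig.~\ref{fig:decomposition}, where tighter bounds are obtained by decomposing $P_1$ and $P_2$ further rather than by improving the present step.
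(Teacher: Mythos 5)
Your proposal is correct and follows exactly the route the paper intends (and leaves implicit): instantiate the naïve bound $1 \leq |\ext{Q}| \leq |Q|!$ on $P_1$ and $P_2$, multiply, and scale by the binomial coefficient from Theorem~\ref{thm:disjoint}. The closing remark about tightness and the motivation for recursive decomposition is a nice bonus but not needed for the corollary itself.
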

\begin{theorem}[Minimal element decomposition]
    Given a poset $P$ s.t.~${|P|>1}$ with set of minimal elements $\min P$ s.t.~$a \in \min P$ and $b \prec a$ implies $b = a$, then
    \[|\ext{P}| = \sum_{x \in \min P} |\ext{P \setminus \{x\}}|\]
    \label{thm:minimal}
\end{theorem}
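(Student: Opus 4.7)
The plan is to establish a bijection between $\ext{P}$ and the disjoint union $\bigsqcup_{x \in \min P} \ext{P \setminus \{x\}}$, from which the cardinality equation follows immediately, noting that the union is disjoint because the index uniquely identifies which summand an element belongs to.

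First, I would argue that every linear extension $L \in \ext{P}$ has a first element $x$ that must lie in $\min P$. Indeed, if $x$ is the first element of $L$ and some $y \prec x$ in $P$ with $y \neq x$ existed, then since $L$ extends $\prec$, we would need $y$ to precede $x$ in $L$, contradicting that $x$ is first. Thus the map $\phi : \ext{P} \to \bigsqcup_{x\in\min P}\ext{P\setminus\{x\}}$ sending $L$ to the pair consisting of its first element $x$ together with the restriction $L' := L \setminus \{x\}$ is well-defined: $L'$ is a total order on $E \setminus \{x\}$ that still respects $\prec$ restricted to $P \setminus \{x\}$, hence $L' \in \ext{P \setminus \{x\}}$.

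Second, I would construct the inverse $\psi$: given $x \in \min P$ and $L' \in \ext{P \setminus \{x\}}$, define $\psi(x, L')$ by prepending $x$ to $L'$. This is a total order on $E$, and it respects $\prec$ because (i) relations among elements of $P \setminus \{x\}$ are preserved by $L'$, and (ii) any relation $x \prec z$ in $P$ with $z \neq x$ is satisfied since $x$ appears before all other elements, while no relation $z \prec x$ with $z \neq x$ can exist by minimality of $x$. It is then routine to verify that $\phi$ and $\psi$ are mutually inverse. Summing cardinalities across the disjoint union yields the stated identity.

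The main obstacle, such as it is, is just the careful bookkeeping in verifying that restricting a linear extension to $P \setminus \{x\}$ yields a valid linear extension of the induced subposet, and dually that prepending a minimal element yields a valid linear extension of $P$; both reduce to the minimality condition $b \prec x \implies b = x$. No nontrivial combinatorial estimate is required. Alternatively, one may simply cite Mohring~\cite{mohring_computationally_1989} as the companion Thm.~\ref{thm:disjoint} does, since this decomposition is standard.
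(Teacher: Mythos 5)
Your proof is correct: the bijection between $\ext{P}$ and the disjoint union $\bigsqcup_{x \in \min P} \ext{P \setminus \{x\}}$, via ``strip off the (necessarily minimal) first element'' and its inverse ``prepend a minimal element,'' is the standard and complete argument for this recurrence, and both directions are justified properly by the minimality condition. The paper itself offers no argument here --- it simply writes ``See~\cite{talvitie_counting_2018}'' --- so your write-up is a self-contained version of exactly the proof that reference would supply; nothing is gained or lost beyond making the argument explicit. One small correction to your closing remark: the companion Thm.~\ref{thm:disjoint} cites M\"ohring~\cite{mohring_computationally_1989}, but this theorem is credited to Talvitie et al.~\cite{talvitie_counting_2018}, so that is the citation to use if you prefer to defer to the literature.
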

\begin{proof}
    See~\cite{talvitie_counting_2018}.
\end{proof}
\begin{corollary}
    Given poset $P$ s.t.~$|P|>1$ with set of minimal elements $\min P$, then by applying naïve bounds to Thm.~\ref{thm:minimal} we can establish that
    \begin{equation*}
         |\min P| = \sum_{x \in \min P} 1 \leq \sum_{x \in \min P} |\ext{P \setminus \{x\}}| \leq \sum_{x \in \min P} |P \setminus \{x\}|!
    \end{equation*}
    \label{cor:bounds-minelement}
\end{corollary}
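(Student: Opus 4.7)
The plan is to apply the same naïve bound $1 \leq |\ext{Q}| \leq |Q|!$ used at the start of Section~\ref{subsec:bounding-linear-extension-count} term-by-term to the identity provided by Theorem~\ref{thm:minimal}, namely $|\ext{P}| = \sum_{x \in \min P} |\ext{P \setminus \{x\}}|$. This is exactly the same recipe that yields Corollary~\ref{cor:bounds-disjoint} from Theorem~\ref{thm:disjoint}: substitute per-summand bounds into the exact decomposition.

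In more detail, I would first establish the naïve bound on each summand. For any $x \in \min P$, the hypothesis $|P| > 1$ guarantees that $P \setminus \{x\}$ is a non-empty poset; any non-empty finite poset admits at least one linear extension (as can be shown by iteratively extracting a minimal element), giving $1 \leq |\ext{P \setminus \{x\}}|$. Moreover, every linear extension of $P \setminus \{x\}$ is a permutation of the underlying $|P \setminus \{x\}|$-element set, so $|\ext{P \setminus \{x\}}| \leq |P \setminus \{x\}|!$. Summing these two inequalities over $x \in \min P$ and invoking Theorem~\ref{thm:minimal} yields
\begin{equation*}
\sum_{x \in \min P} 1 \;\leq\; \sum_{x \in \min P} |\ext{P \setminus \{x\}}| = |\ext{P}| \;\leq\; \sum_{x \in \min P} |P \setminus \{x\}|!,
\end{equation*}
and the leftmost sum equals $|\min P|$ by definition, giving the chain stated in the corollary.

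There is essentially no obstacle: the result is a direct consequence of Theorem~\ref{thm:minimal} together with monotonicity of summation. The only subtlety worth flagging in writing up is that the lower bound $1 \leq |\ext{P \setminus \{x\}}|$ requires $P \setminus \{x\}$ to be non-empty, which follows immediately from the hypothesis $|P| > 1$; no further combinatorial input is needed.
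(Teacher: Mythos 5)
Your proof is correct and matches the paper's intended argument exactly: the paper gives no explicit proof for this corollary, but its statement (``by applying naïve bounds to Thm.~\ref{thm:minimal}'') is precisely your term-by-term substitution of $1 \leq |\ext{Q}| \leq |Q|!$ into the minimal-element decomposition identity. Your remark that $|P|>1$ guarantees each $P \setminus \{x\}$ is non-empty is a reasonable bit of extra care, though not something the paper bothers to spell out.
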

These decomposition rules can be recursively applied, substituting a new lower bound estimate at each step as the resulting posets shrink in size, either by removal of one minimal element, or by splitting into two or more disjoint posets.
The procedure is described in Alg.~\ref{alg:ext-bounds}, and in Fig.~$\ref{fig:decomposition}$ we illustrate decomposition of the poset from Fig.~\ref{fig:poset-running}, with naïve bound calculations from Cor.~\ref{cor:bounds-disjoint} and ~\ref{cor:bounds-minelement}.

\subsection{Bounds for pruning}

The search space over models is exponential in the number of features and calculating $\p(\hyp | \Tobs)$ for each model is expensive due to the $|\ext{P_\hyp}|$ term in Eq.~\ref{eq:likelihood2}.
Two factors, respectively, help tackle these challenges: pruning large portions of the model space; and quick evaluation for each model using bounds on, rather than exact, linear extension counts.
The latter will allow us to either reject a model, or mark it for revisit and reestimation, meanwhile potentially updating the current best score.

When maximizing an objective function $f(\hyp)$, pruning in branch and bound requires a bounding function $b$ that provides a closer and closer upper bound on $f$ as more constraints are added to set of candidate solutions (models), and agrees with $f$ for solutions at the end of the search space.
Formally, $b(\hyp) \geq f(\hyp)$ for all $\hyp$, $b(\hyp) = f(\hyp)$ for all leaves in the search space, and $b(\hyp_i) \leq b(\hyp_j)$ if $\hyp_j$ is the parent state (model) of $\hyp_i$.

Our full objective function $\p(\hyp | D)$ is intentionally non-convex, with the terms $\p(\hyp)$ and $\p(\Tobs | \hyp)$ pulling in opposing directions.
Nevertheless, we can formulate a convex bounding function that takes advantage of upper bounds on each term, assuming a monotone branching rule w.r.t.~model inclusion.
That is, that child states in our search space are generated by adding features to a candidate model.

\begin{theorem}[$\p(\hyp)$ is strictly antitonic] 

\[\p(\hyp_i) > \p(\hyp_j) \text{~for~} \hyp_i \subset \hyp_j.\]
\end{theorem}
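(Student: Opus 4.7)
The plan is to observe that the denominator in Eq.~\ref{eq:model-prior} is a constant independent of $\hyp$, so it suffices to work with the numerator $N(\hyp) := \prod_{\X\in\hyp}\Hnorm(\X)$ and show $N(\hyp_i) > N(\hyp_j)$ whenever $\hyp_i \subsetneq \hyp_j$.

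First I would write $\hyp_j = \hyp_i \cup \Delta$ with $\Delta := \hyp_j \setminus \hyp_i$, and note that $\Delta \neq \emptyset$ by the strict inclusion. Since the product factorizes over disjoint unions,
\[
N(\hyp_j) \;=\; N(\hyp_i)\cdot\prod_{\X\in\Delta}\Hnorm(\X).
\]
Next I would invoke the definition of normalized Shannon entropy: because of the $+1$ padding factor in the denominator of $\Hnorm$, one has $\Hnorm(\X) = H(\X)/(1+\log|\Omega_\X|) < 1$ for every feature $\X$ (the numerator is at most $\log|\Omega_\X|$, which is strictly less than $1 + \log|\Omega_\X|$). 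Hence each factor in the product on the right lies in $[0,1)$, and the product over the nonempty set $\Delta$ is strictly less than $1$, giving $N(\hyp_j) < N(\hyp_i)$. Dividing through by the shared normalizer $Z = \sum_{\hyp\subseteq\dom}\prod_{\X\in\hyp}\Hnorm(\X)$ yields $\p(\hyp_i) > \p(\hyp_j)$.

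The one genuine subtlety — and the point I expect to require a line of discussion — is the case $\Hnorm(\X) = 0$, which can occur when a feature is deterministic (entropy zero). If any such $\X$ already lies in $\hyp_i$, then $N(\hyp_i) = N(\hyp_j) = 0$ and the inequality becomes non-strict. I would handle this either by restricting $\dom$ to features with $H(\X) > 0$ (which is natural given that a constant-valued feature is precisely the degenerate case the prior is designed to filter out, cf.~Tab.~\ref{tab:model-prior-examples}), or by assuming $\hyp_i$ itself contains no such degenerate feature, which is consistent with the way candidate models are generated in the branch-and-bound search. Under either convention, every $\Hnorm(\X)$ appearing in the relevant products is strictly positive and strictly less than $1$, so the argument above delivers the strict inequality required by the theorem.
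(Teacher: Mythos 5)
Your proof is correct and follows essentially the same route as the paper's: the normalizer cancels, the numerator factorizes over $\hyp_j = \hyp_i \cup \Delta$, and each factor $\Hnorm(\X)\in[0,1)$ forces the product over the nonempty $\Delta$ strictly below $1$. Your closing discussion of the $\Hnorm(\X)=0$ case is in fact more careful than the paper's own one-line argument, which silently assumes $\prod_{\X\in\hyp_i}\Hnorm(\X)>0$ and would otherwise yield only a non-strict inequality.
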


\begin{proof}
    We have defined $\p(\hyp_i)$ to be the product of $\Hnorm(\X)$ for each $\X \in \hyp_i$. Since $\Hnorm(\X)$ is bounded in $[0,1)$, adding an extra term to this product cannot increase its value. We have stated that $\hyp_i \subset \hyp_j$, so any feature $\X \in \hyp_j \setminus \hyp_i$ can only reduce $\p(\hyp_j)$ relative to $\p(\hyp_i)$.
\end{proof}

\begin{lemma}[$\hyp\subseteq\hyp'$ implies $g_\hyp(D) \subseteq g_{\hyp'}(D)$]
Let $D$ be a total order and let $\hyp$ and $\hyp'$ be models s.t.~$\hyp\subseteq\hyp'$.
Then the set of order relations $g_\hyp(D)$ induced by $\hyp$ via the \dfpath~ generator as defined in Def.~\ref{def:dfpath-generator} will be a subset of the order relations $g_{\hyp'}(D)$ induced by $\hyp'$.

\label{eq:lemma-1}
\end{lemma}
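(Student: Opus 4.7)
The plan is to prove the inclusion by a direct element-chasing argument. I would fix an arbitrary pair $(a,b) \in g_\hyp(D)$ and show that $(a,b) \in g_{\hyp'}(D)$, which immediately yields the desired subset relation.

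First, I would unpack the defining condition from Def.~\ref{def:dfpath-generator}: $(a,b) \in g_\hyp(D)$ means that $a \prec b$ in $D$ and that there exists some $\X \in \hyp$ with $a \overset{\X}{\sim} b$. The crucial observation here is that the feature relation $\overset{\X}{\sim}$ itself depends only on $\X$ (together with the ambient feature set $\feat$, in the derived case), and not on the model under consideration. This can be seen by inspection of Def.~\ref{def:feature-relation} for atomic features, which quantifies only over values of $X$, and of Def.~\ref{def:derived-feature-relation} for derived features, where the intermediate feature $X_j$ is drawn from $\feat \setminus \{X_i, X_k\}$ rather than from the model. Hence the symbol $\overset{\X}{\sim}$ has the same meaning whether interpreted against $\hyp$ or against $\hyp'$.

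With this established, the hypothesis $\hyp \subseteq \hyp'$ gives $\X \in \hyp'$, so $\X$ is itself a witness that $\exists \Y \in \hyp'.~ a \overset{\Y}{\sim} b$. Combined with the fact $a \prec b$ in $D$ (which is independent of any model), Def.~\ref{def:dfpath-generator} applied to $\hyp'$ yields $(a,b) \in g_{\hyp'}(D)$. Since $(a,b)$ was arbitrary, $g_\hyp(D) \subseteq g_{\hyp'}(D)$ as required.

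The main (and only) subtle point is the observation that $\overset{\X}{\sim}$ does not secretly depend on $\hyp$; everything else is routine set-inclusion bookkeeping. No appeal to properties of $D$ beyond it being a partial order is required, so the same argument in fact works whenever $D$ is any partial order and not just a total one.
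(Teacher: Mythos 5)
Your proof is correct and matches the paper's intent: the paper simply states that the lemma ``follows immediately'' from Definitions~\ref{def:feature-relation}, \ref{def:derived-feature-relation}, and~\ref{def:dfpath-generator}, and your element-chasing argument is exactly the routine unpacking being alluded to, with the one genuinely worthwhile addition of checking explicitly that $\overset{\X}{\sim}$ does not depend on the ambient model $\hyp$. Your closing remark that the argument needs only that $D$ is a partial order (not a total order) is also accurate and consistent with how $g_\hyp$ is used elsewhere in the paper.
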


\begin{proof}
    Follows immediately from Def.~\ref{def:feature-relation},\ref{def:derived-feature-relation},\ref{def:dfpath-generator}.
\end{proof}

\begin{lemma}[$\prec \subseteq \prec'$ implies $\ext{\prec} \supseteq \ext{\prec'}$]
Let $\prec$ and $\prec'$ denote order relations over the same set.
For the respective sets of linear extensions, $\ext{\prec}$ and $\ext{\prec'}$, the fact that $\prec \subseteq \prec'$ implies that $\ext{\prec} \supseteq \ext{\prec'}$.
We say the latter order is more \emph{restrictive} than the former.

\label{eq:lemma-2}
\end{lemma}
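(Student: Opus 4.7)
The plan is to prove the contrapositive-free direction directly by a membership argument: take an arbitrary linear extension of the more restrictive order and show it must also extend the less restrictive one. This is essentially a one-line consequence of the transitivity of the subset relation combined with the definition of a linear extension as a total order containing the given partial order.

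Concretely, I would first unpack definitions. Recall that for a partial order $\prec$ on a set $E$, the set $\ext{\prec}$ consists of all total orders $L$ on $E$ such that $\prec\,\subseteq L$ (viewing each order as a set of pairs). Then I would fix an arbitrary $L \in \ext{\prec'}$, which by definition means $L$ is a total order on the common underlying set and $\prec'\,\subseteq L$. Combining this with the hypothesis $\prec\,\subseteq\,\prec'$ and transitivity of set inclusion yields $\prec\,\subseteq L$, so $L$ is a total order containing $\prec$, i.e., $L \in \ext{\prec}$. Since $L$ was arbitrary, $\ext{\prec'} \subseteq \ext{\prec}$, which is exactly the claim $\ext{\prec} \supseteq \ext{\prec'}$.

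I do not anticipate any real obstacle here; the result is a standard order-theoretic fact and the only thing to be careful about is the direction of the inclusion, which is opposite to the inclusion on the orders themselves (tighter constraints permit fewer linearizations). If a slicker phrasing is desired, one could observe that $\ext{\prec}$ is by definition the set of total orders above $\prec$ in the inclusion lattice of binary relations on $E$, and enlarging $\prec$ to $\prec'$ can only remove total orders from this set, never add them. No case analysis on the structure of $\prec$ or $\prec'$ is needed, and no appeal to any earlier result of the paper is required beyond the standing definition of $\ext{\cdot}$.
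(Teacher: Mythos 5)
Your proof is correct and rests on the same underlying fact as the paper's: any total order containing $\prec'$ must contain $\prec$ by transitivity of set inclusion, hence $\ext{\prec'} \subseteq \ext{\prec}$. The paper phrases this contrapositively as a proof by contradiction (supposing some $D \in \ext{\prec'} \setminus \ext{\prec}$ and deriving $\prec \not\subseteq \prec'$), but the mathematical content is identical and your direct version is, if anything, cleaner.
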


\begin{figure*}[ht]
\includegraphics[width=\textwidth]{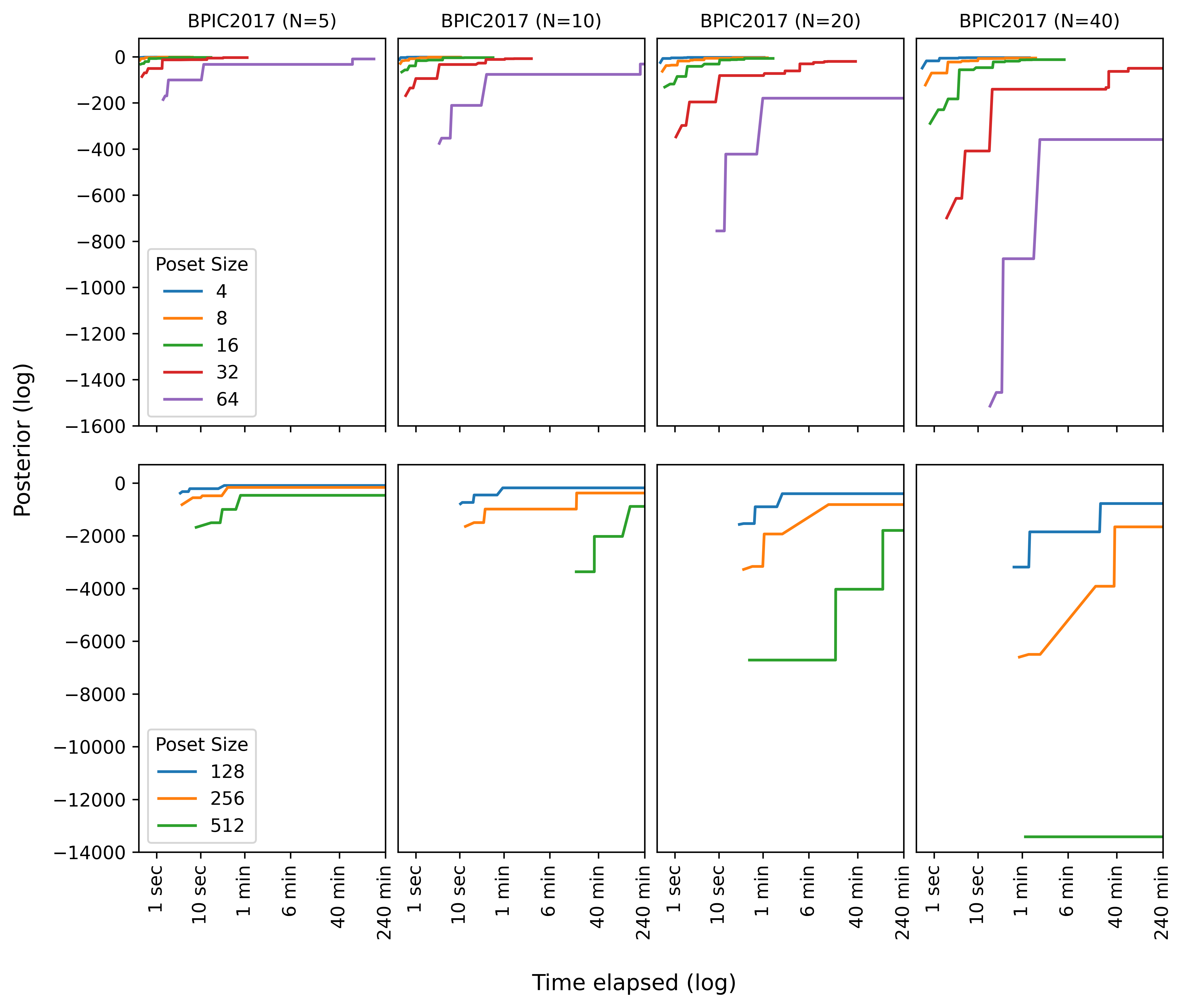}
    \caption[]{Convergence results after four hours across varying number of samples ($5 \leq N \leq 40$) and poset/sample size ($4 \leq |D| \leq 512$).
    Lines begin at first completed estimate; those that end indicate the algorithm finished before four hour cutoff.}
\end{figure*}

\begin{proof}
    Assume that $\prec = \prec'$, clearly this means that $\ext{\prec} = \ext{\prec'}$.
    It remains to show that when $\prec \subset \prec'$, that no total order $D$ exists s.t. $D \in \ext{\prec'}$ and $D \notin \ext{\prec}$.
    Assume such a $D$ exists and that $\prec \subset \prec'$.
    Since $D \notin \ext{\prec}$ this means that there exists some relation $(a,b) \in \prec$ that prevents $D$ from extending $\prec$ while $(a,b) \notin \prec'$ since $D$ extends $\prec'$.
    This means that $\prec \not \subseteq \prec'$ and we have a contradiction of the premise.
\end{proof}

The following bound on $\p(D|\hyp)$ assumes that from the state associated with $\hyp$, the set of features in models reachable from this state is known, allowing us to determine the most restrictive reachable poset.
This assumption is consistent with standard algorithms for recursively generating powersets.

\begin{theorem}[Upper bound on $\p(D | \hyp)$ is antitonic]
    Let $S_\hyp$ denote the portion of the search space reachable from state $\hyp$ (inclusive $\hyp$), with $\hyp\subseteq\hyp'$ for any $\hyp'\in S_\hyp$ and define 

    \begin{equation}
        \hyp^*:=\bigcup_{\hyp'\in S_\hyp} \hyp'
        \label{eq:model-closure}
    \end{equation}

    which represents the largest, most restrictive model reachable from $S_\hyp$. Then 
    \[{\p(D | \hyp) \leq \frac{1}{|\ext{\hyp^*}|}} \text{~and~} \frac{1}{|\ext{\hyp_i^*}|} \geq \frac{1}{|\ext{\hyp_j^*}|} \text{~for~} \hyp_i \subset \hyp_j.\]
\label{eq:bounds-on-likelihood}
\end{theorem}

\begin{proof}
    By Lem.~\ref{eq:lemma-1} and~\ref{eq:lemma-2}, since $\hyp \subseteq \hyp^*$ we have that 
    \[g_\hyp(D) \subseteq g_{\hyp^*}(D)\]
    which implies \[|\ext{\hyp}^D| \geq |\ext{\hyp^*}^D|\] 
    and equivalently 
    \[\frac{1}{|\ext{\hyp}^D|} \leq \frac{1}{|\ext{\hyp^*}^D|}.\]
    By the same reasoning, we have that $\hyp_i \subseteq \hyp_j$   implies 
    \[\frac{1}{|\ext{\hyp_i^*}|} \geq \frac{1}{|\ext{\hyp_j^*}|}.\]
\end{proof}

\begin{table*}[ht]
    \caption{Models discovered for configurations that completed within four hours. Feature {\small\texttt{case:concept:name}} refers to the loan application ID, and {\small\texttt{org:resource}} to the employee executing an event.}
    \centering
    \begin{tabular}{c|c|c|c|c|c|c|c|c|c|c|c|c|c|c|c|c}
        &Samples&\multicolumn{4}{c|}{N=5} & \multicolumn{4}{c|}{N=10} & \multicolumn{4}{c|}{N=20} & \multicolumn{3}{c}{N=40} \\
        \hline
        \hline
        &Poset size & 4 & 8 & 16 & 32 & 4 & 8 & 16 & 32 & 4 & 8 & 16 & 32 & 4 & 8 & 16 \\
        \hline
        \hline
        \multirow{5}*{\rotatebox{90}{Feature~~}}\hspace{5mm}&\texttt{Action} & & & & & & & & & & & & & & & \\
        \cline{2-17}
        &\texttt{ApplicationType} & \checkmark & \checkmark & \checkmark & \checkmark & \checkmark & \checkmark & \checkmark & \checkmark & \checkmark & \checkmark & \checkmark & \checkmark & \checkmark & \checkmark & \checkmark \\
        \cline{2-17}
        &\texttt{case:concept:name} & & & & & & & & & & & & & & & \\
        \cline{2-17}
        &\texttt{EventOrigin} & \checkmark & \checkmark & \checkmark & \checkmark & \checkmark & \checkmark & \checkmark & \checkmark & \checkmark & \checkmark & \checkmark & \checkmark & \checkmark & \checkmark & \checkmark \\
        \cline{2-17}
        &\texttt{org:resource} & & & & & & & & & & & & \checkmark & & & \\
        \cline{2-17}
        &\texttt{LoanGoal} & & & & & & & \checkmark & & \checkmark & \checkmark & \checkmark & \checkmark & \checkmark & \checkmark & \checkmark \\
        \cline{2-17}
        &\texttt{lifecycle:transition} & & & & & & & & & \checkmark & & & \checkmark & & & \checkmark \\
    \end{tabular}
    \vspace{2mm}
    \label{tab:models}
\end{table*}

\begin{figure*}[ht]
\centering
\begin{subfigure}[b]{\textwidth}
   \includegraphics[width=1.0\textwidth,trim={22mm 233.2mm 106mm 33mm}, clip]{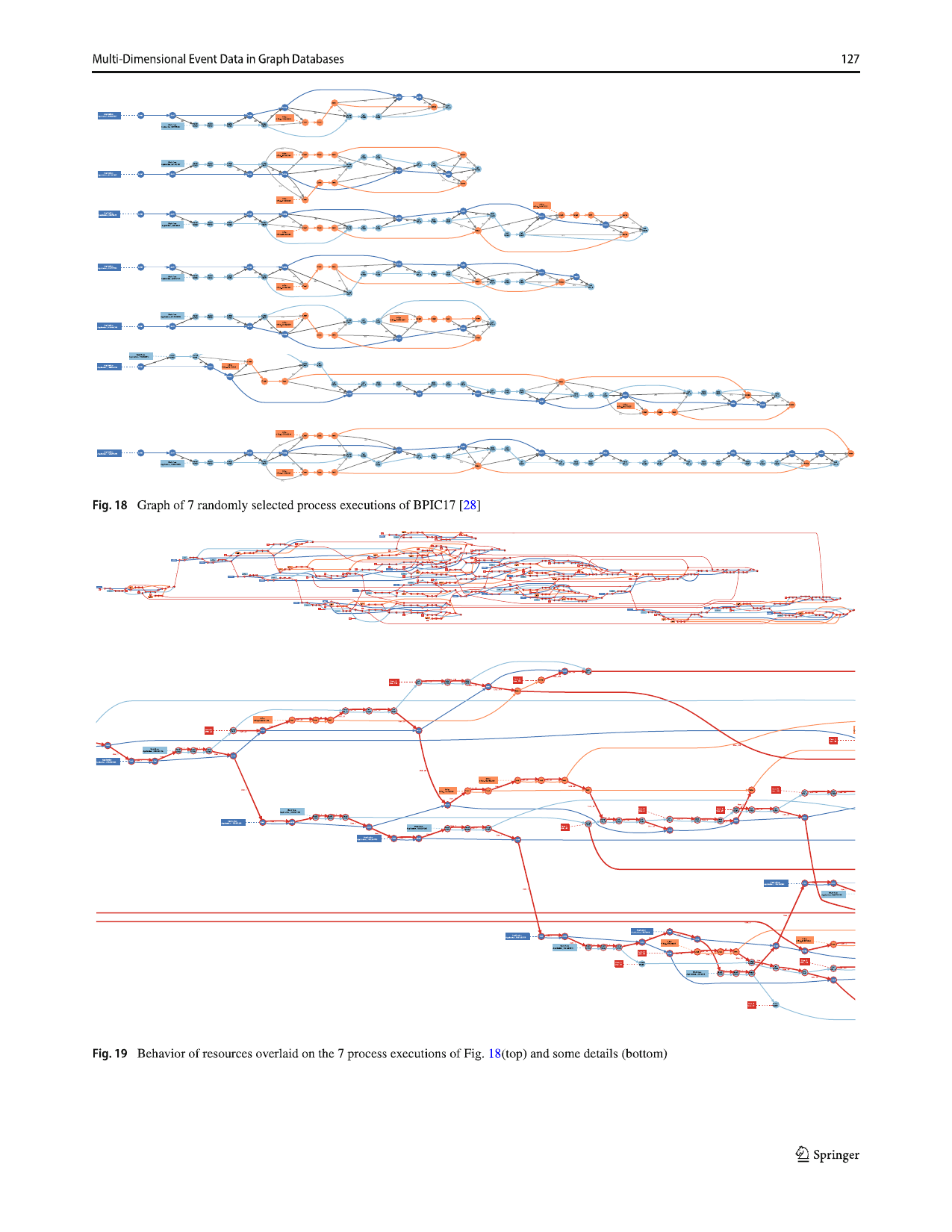}
   \caption{}
   \label{fig:their-ekg-681547497}
\end{subfigure}
\begin{subfigure}[b]{\textwidth}
   \includegraphics[width=1\linewidth]{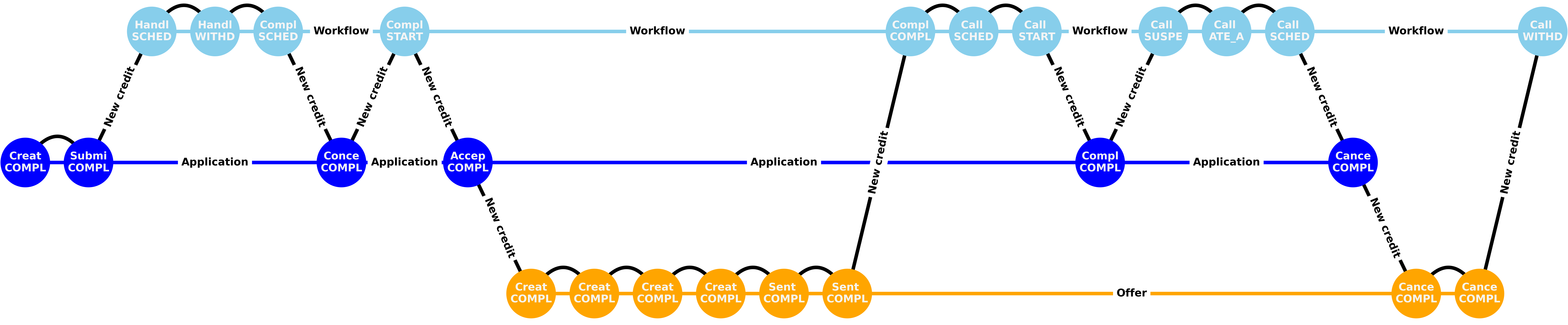}
   \caption{}
   \label{fig:our-ekg-apptype-681547497}
\end{subfigure}
\caption{Event knowledge graphs for the loan application number 681547497 from the BPIC 2017 event log. Fig.~(\subref{fig:their-ekg-681547497}) is the handbuilt EKG from~\cite{esser_multi-dimensional_2021}, Fig.~(\subref{fig:our-ekg-apptype-681547497}) is derived from the model most commonly discovered by our algorithm.}
    \label{fig:full-ekg-comparison-1}
\end{figure*}

\begin{corollary}
    Let $S_\hyp$ be defined as in Thm.~$\ref{eq:bounds-on-likelihood}$, then 
    \[\p(\hyp)\p(D | \hyp) < \frac{\p(\hyp)}{|\ext{\hyp^*}^D|}\]
    and
    \[\frac{\p(\hyp_i)}{|\ext{\hyp_i^*}^D|} > \frac{\p(\hyp_j)}{|\ext{\hyp_j^*}^D|}\]
    for $\hyp_i \subset \hyp_j$.
    By extension to $N$ independent observations
    \[\p(\hyp_i)\prod_{k=1}^N\frac{1}{|\ext{\hyp_i^*}^{D_k}|} > \p(\hyp_j)\prod_{k=1}^N\frac{1}{|\ext{\hyp_j^*}^{D_k}|} \text{~for~} \hyp_i \subset \hyp_j.\]
    \label{cor:best-possible}
\end{corollary}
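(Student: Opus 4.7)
The plan is to obtain Corollary~\ref{cor:best-possible} as a direct multiplicative combination of the two antitonic results that immediately precede it, namely the strict antitonicity of $\p(\hyp)$ and the weak antitonicity of the likelihood upper bound $1/|\ext{\hyp^*}^D|$ from Theorem~\ref{eq:bounds-on-likelihood}. Since both quantities are strictly positive (the prior is positive whenever every feature in $\hyp$ has $\Hnorm > 0$, and extension counts are at least $1$), multiplying matching inequalities preserves the strict direction coming from the prior.

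For the first claim $\p(\hyp)\p(D\mid\hyp) < \p(\hyp)/|\ext{\hyp^*}^D|$, I would first note that by construction $\hyp \subseteq \hyp^*$, so by Lemma~\ref{eq:lemma-1} we have $g_\hyp(D) \subseteq g_{\hyp^*}(D)$, and by Lemma~\ref{eq:lemma-2} this yields $|\ext{\hyp}^D| \geq |\ext{\hyp^*}^D|$, i.e.\ $\p(D\mid\hyp) = 1/|\ext{\hyp}^D| \leq 1/|\ext{\hyp^*}^D|$. Multiplying through by the positive scalar $\p(\hyp)$ preserves this inequality; the strict form is obtained in any non-leaf state of the search, where $\hyp\subsetneq\hyp^*$ forces at least one additional feature relation and hence strict containment $g_\hyp(D) \subsetneq g_{\hyp^*}(D)$.

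For the second claim, given $\hyp_i \subset \hyp_j$, I would combine the strict inequality $\p(\hyp_i) > \p(\hyp_j)$ from the antitonicity of $\p(\hyp)$ with the weak inequality $1/|\ext{\hyp_i^*}^D| \geq 1/|\ext{\hyp_j^*}^D|$ from Theorem~\ref{eq:bounds-on-likelihood}. Because $\hyp_j$ is reachable from $\hyp_i$ we have $S_{\hyp_j} \subseteq S_{\hyp_i}$ and thus $\hyp_j^* \subseteq \hyp_i^*$, ensuring that Lemma~\ref{eq:lemma-1} and Lemma~\ref{eq:lemma-2} apply to $\hyp_i^*$ and $\hyp_j^*$ in the right direction; all quantities being positive, the termwise product preserves the strict inequality contributed by the prior.

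Extension to $N$ independent observations is routine: applying Theorem~\ref{eq:bounds-on-likelihood} separately to each $D_k$ gives $1/|\ext{\hyp_i^*}^{D_k}| \geq 1/|\ext{\hyp_j^*}^{D_k}|$ for every $k$, and since each factor is positive the product over $k$ preserves the inequality, after which multiplication by the strict inequality $\p(\hyp_i) > \p(\hyp_j)$ yields the stated result. The only subtlety — and the ``main obstacle'' such as it is — is bookkeeping the direction of containment between $\hyp^*$'s when moving from $\hyp_i$ to $\hyp_j$, and ensuring that the strict inequality is not lost in the product; both are handled by isolating the strict part in the prior while letting the likelihood bound contribute only weak monotonicity.
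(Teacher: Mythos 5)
Your proposal is correct and follows exactly the route the paper intends: the paper gives no explicit proof of this corollary, treating it as immediate from multiplying the strict antitonicity of $\p(\hyp)$ by the weak antitonicity of the likelihood upper bound from Thm.~\ref{eq:bounds-on-likelihood}, which is precisely your argument, and your isolation of the strictness in the prior factor is the right way to make the second claim rigorous. The only soft spot is your justification of strictness in the first claim: $\hyp \subsetneq \hyp^*$ does not by itself force $g_\hyp(D) \subsetneq g_{\hyp^*}(D)$ (an added feature may induce no new relations), nor does a strictly larger relation set force strictly fewer linear extensions (new pairs may already be implied by transitivity) --- but this looseness is inherited from the paper's own statement of a strict inequality where only $\leq$ is guaranteed, and the weak form suffices for the pruning rule in Alg.~\ref{alg:bnb}.
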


This bound is integrated into Alg.~\ref{alg:bnb} on line~\ref{alg:bnb:prune} using a lower bound on $|\ext{\hyp^*}^{D}|$.

\subsection{Bounds for model evaluation}

When pruning is not possible, we can still harness bounds on $\p(\hyp)\p(\mathcal{D} | \hyp)$ to minimize the time spent evaluating a candidate model.
Alg.~\ref{alg:bnb} takes advantage of the following three cases.

The first (line~\ref{alg:bnb:better}) allows us to update the current best score and model when its lower bound is above the current best score.
The second (line~\ref{alg:bnb:maybe}) allows us to mark a model as a potential candidate to be revisited when the current best score lies between its upper and lower bound.
The third (line~\ref{alg:bnb:worse}) allows us to dismiss a model as soon as its upper bound is below the current best score.

Note that in cases where bounds have not converged, the model will be marked to be revisited for a more precise bound calculation.
Eagerly updating the lower bound estimate on the best possible score in this manner helps improve the convergence rate.
Quickly marking models (by extension portions of search space) for revisit and moving on means that when models are revisited, they are more likely to be dismissed when compared to the now greatly improved best score, or to require less precision in bound estimation before eventual dismissal.

\section{Experiments}\label{sec:evaluation}
\begin{figure*}[ht]
\centering
\begin{subfigure}[b]{\textwidth}
      \includegraphics[width=1.0\textwidth,trim={22mm 248mm 113mm 20mm}, clip]{graphics/bpic2017_ekgs_2.pdf}
   \caption{}
   \label{fig:their-ekg-889180637}
\end{subfigure}
\begin{subfigure}[b]{\textwidth}
   \centering\includegraphics[width=1\linewidth]{graphics/889180637_EventOrigin_caseApplicationType}
   \caption{}
   \label{fig:our-ekg-apptype-889180637}
\end{subfigure}
\caption{Event knowledge graphs for the loan application number 889180637 from the BPIC 2017 event log. Fig.~(\subref{fig:their-ekg-889180637}) is the handbuilt EKG from~\cite{esser_multi-dimensional_2021}, Fig.~(\subref{fig:our-ekg-apptype-889180637}) is derived from the model most commonly discovered by our algorithm.}
    \label{fig:full-ekg-comparison-2}
\end{figure*}

We evaluate our approach both in terms of quality of output and runtime performance on the well-known BPIC 2017 event log~\cite{BPIC2017} which is commonly used as a running example in the EKG literature and for which handbuilt EKGs exist.
It contains a total of 19 features, resulting in a search space of 524,288 possible models built from atomic features.

The algorithm is implemented using a prefix-based, breadth-first branching rule for exploring the model space, which satisfies the monotonic condition on branching rules stipulated in Thm.~\ref{eq:bounds-on-likelihood}.
The subroutine for estimating bounds on linear extensions of a poset uses a depth-first approach with iterative deepening.
The stopping condition (see line~\ref{alg:ext-bounds:stopping-condition} in Alg.~\ref{alg:ext-bounds}) is based on time elapsed\footnote{Due to the time required to roll back the recursive call stack, actual time elapsed will often exceed the user-defined stopping condition.}, set to 1 second in the first rounds, but increasing exponentially when revisiting models that were marked for reestimation on earlier passes.
This preliminary implementation considers only atomic features since the implications for monotonicity properties of including derived features was not immediately clear.
This is a top-priority for future work.

To assess runtime performance, we monitored the increase in the current best score over time for varying numbers ($N$) and sizes ($|D|$) of samples, i.e., posets.
The size of a sample will typically have a drastic impact on the complexity of estimating bounds on linear extensions, excepting easily computable special cases such as chains or free posets.
Since samples are independent, the \emph{number} of samples will have a linear impact on the time required to evaluate a model, or sublinear with parallelization.
There will be some variance in the time required to evaluate each sample when the poset structure differs, but this should amortize across samples.

Increased sample size has the important downstream effect of delaying the establishment of bounds early on which - via pruning and model dismissal - allows rapid convergence.
Indeed, we see that the poorest performing parametrizations (sample size 128 to 512), take a very long time to establish the very first estimate after which they begin the same rapid convergence pattern.

To assess the quality of the discovered models, we compare discovered models with handbuilt models from~\cite{esser_multi-dimensional_2021}. Tab.~\ref{tab:models} shows an overview of the models discovered by those configurations that completed within an hour. Figs.~\ref{fig:full-ekg-comparison-1},\ref{fig:full-ekg-comparison-2}, and~\ref{fig:full-ekg-comparison-3} show comparisons of a selection of handbuilt EKGs visualized in~\cite{esser_multi-dimensional_2021} - namely those for application numbers 681547497,  889180637, and 2014483796 - with the corresponding EKGs for the most commonly discovered model.

\begin{figure*}[ht]
\centering
\begin{subfigure}[b]{\textwidth}
   \includegraphics[width=1.0\textwidth,trim={22mm 200.3mm 103mm 68mm}, clip]{graphics/bpic2017_ekgs_2.pdf}
   \caption{}
   \label{fig:their-ekg-2014483796}
\end{subfigure}
\begin{subfigure}[b]{\textwidth}
   \includegraphics[width=1\linewidth]{graphics/2014483796_EventOrigin_caseApplicationType}
   \caption{}
   \label{fig:our-ekg-apptype-2014483796}
\end{subfigure}
\caption{Event knowledge graphs for the loan application number 2014483796 from the BPIC 2017 event log. Fig.~(\subref{fig:their-ekg-2014483796}) is the handbuilt EKG from~\cite{esser_multi-dimensional_2021}, Fig.~(\subref{fig:our-ekg-apptype-2014483796}) is derived from the model most commonly discovered by our algorithm.}
    \label{fig:full-ekg-comparison-3}
\end{figure*}

Figs.~\ref{fig:our-ekg-apptype-681547497},\ref{fig:our-ekg-apptype-889180637}, and~\ref{fig:our-ekg-apptype-2014483796} show the EKGs for model {\small\{\{\texttt{EventOrigin}\},\{\texttt{ApplicationType}\},\{\texttt{LoanGoal}\}\}} which shows a close, though not exact, correspondence with the handbuilt EKG.
In particular, that {\small\texttt{EventOrigin}} was identified is notable, since this is a core entity in the handbuilt model and defines the three colored ``swimlanes'' corresponding to the values: \textit{Application}, \textit{Workflow} and \textit{Offer}.
The {\small\texttt{ApplicationType}} and {\small\texttt{LoanGoal}} entities are overlapping within cases, and have the effect of binding those events together that are connected by the handbuilt derived entities: \textit{AW}, \textit{WO}, and \textit{AO} in Figs.~\ref{fig:our-ekg-apptype-681547497},\ref{fig:our-ekg-apptype-889180637}, and~\ref{fig:our-ekg-apptype-2014483796}.
One clear discrepancy between discovered models and the handbuilt EKG is that our algorithm fails to select the \texttt{OfferID} feature, resulting in two offers being combined into one \dfpath in applications 681547497 and 2014483796 (see Figs.~\ref{fig:our-ekg-apptype-681547497},\ref{fig:our-ekg-apptype-889180637} and~\ref{fig:our-ekg-apptype-2014483796}).

\section{Conclusion}\label{sec:conclusion}
We have presented the first algorithm for the automated discovery of event knowledge graphs (EKG) from uncurated data, and shown that it can discover models congruent with handbuilt EKGs within a reasonable runtime.
Our approach builds on a principled, probabilistic framing: viewing the outcome space of an EKG as the linear extensions it admits when interpreted as poset.

This framing lays a foundation for several natural extensions, in particular in extending or redefining the notion of outcome space.
This could include pairwise event orders, a concept ubiquitous in literature on linear extensions.
And importantly, adding event features themselves to the outcome space would permit the discovery of patterns at higher levels of abstraction than individual events, including predictive models whose utility would be intuitively demonstrable.

While the theoretical results presented hold for both atomic and derived features, the latter were omitted from our implementation.
Incorporating these is a top priority for future work and requires specifying appropriate branching rules that preserve monotonicity and take advantage of the subsumption of atomic feature relations by derived.

Another crucial question to be addressed concerns the quantitative evaluation of discovered models.
Given a hand-built model, how do we quantitatively measure closeness of a discovered model, e.g.~using graph matching, edit distance, or other similarity metrics?
How can discovered EKGs be evaluated without reference to handbuilt models, e.g.~by comparison to held-out portions of the dataset: whether events or relations?
This would permit a more traditional framing of the task as learning problem --
in particular, might we harness modern graph encoding techniques, e.g.~graph neural networks, to tackle this problem or a reasonable reformulation of it?
While the real-world utility of any EKG discovery techniques will need to be evaluated w.r.t.~concrete use cases, the results presented here should provide a well-founded and extensible point of reference, and a baseline for alternative approaches.


%





\ifCLASSOPTIONcaptionsoff
  \newpage
\fi



%

\bibliographystyle{splncs04}
\bibliography{bibliography.bib}

%








\end{document}